\title{Importance Weighted Active Learning}
\author{\name Alina Beygelzimer \email beygel@us.ibm.com \\
       \addr IBM Thomas J. Watson Research Center \\
       Hawthorne, NY 10532, USA
       \AND
       \name Sanjoy Dasgupta \email dasgupta@cs.ucsd.edu \\
       \addr University of California, San Diego\\
       La Jolla, CA 92093, USA
       \AND
       \name John Langford \email jl@yahoo-inc.com \\
       \addr Yahoo! Research\\
       New York, NY 10018, USA}
\newcommand{\ignore}[1]{}
\def\E{\mathbf{E}}
\def\P{\mathbf{P}}
\def\R{\mathbb{R}}
\newcommand{\rt}{\operatorname{rejection-threshold}}
\begin{document} 
\maketitle

\begin{abstract}
We present a practical and statistically consistent scheme for actively 
learning binary classifiers under general loss functions. Our algorithm
uses importance weighting to correct sampling bias, and by
controlling the variance, we are able to give rigorous 
label complexity bounds for the learning process.
Experiments on passively labeled data show that this approach 
reduces the label complexity required to achieve 
good predictive performance on many learning problems.
\end{abstract}

\begin{keywords}
Active learning, importance weighting, sampling bias
\end{keywords}

\section{Introduction}
Active learning is typically defined by contrast to the
passive model of supervised learning. In passive learning, all the 
labels for an unlabeled dataset are obtained at once, while in active 
learning the learner interactively chooses which data points 
to label. The great hope of active learning is that interaction can
substantially reduce the number of labels required, making learning
more practical.  This hope is known to be
valid in certain special cases, where the number of labels needed
to learn actively has
been shown to be logarithmic in the usual sample complexity of
passive learning;
such cases include thresholds on a line, and linear separators with a
spherically uniform unlabeled data distribution~\citep{Linear}.

Many earlier active learning algorithms, such as~\citep{CAL,Linear}, 
have problems with data that are not perfectly separable under the given 
hypothesis class.  In such cases, they 
can exhibit a lack of statistical consistency: even with an infinite
labeling budget, they might not converge to an optimal predictor 
(see \citet{ALtree} for a discussion).

This problem has recently been addressed in two threads of research.
One approach~\citep{A^2,DHM,Hanneke} constructs learning algorithms
that explicitly use sample complexity bounds to assess which hypotheses
are still ``in the running'' (given the labels seen so far), 
thereby assessing the relative value of different unlabeled points 
(in terms of whether they help distinguish between the remaining
hypotheses). These algorithms have the usual PAC-style convergence 
guarantees, but they also have rigorous label complexity bounds that 
are in many cases significantly better than the bounds for passive 
supervised learning. 
However, these algorithms have yet to see practical
use. {First}, they are built explicitly for
$0$--$1$ loss and are not easily adapted to most other loss functions.
This is problematic because in many applications, other loss functions
are more appropriate for describing the problem, or make learning more
tractable (as with convex proxy losses on linear representations).
{Second}, these algorithms make internal use of generalization 
bounds that are often loose in practice, and they can thus end up
requiring far more labels than are really necessary.
{Finally}, they typically require an explicit enumeration over 
the hypothesis class (or an $\epsilon$-cover thereof), 
which is generally computationally intractable.

The second approach to active learning uses 
importance weights to correct sampling bias~\citep{Bach,Sugiyama}.
This approach has only been analyzed in limited
settings. For example,~\citep{Bach} considers linear models and provides 
an analysis of consistency in cases where either (i) the model class
fits the data perfectly, or (ii) the sampling strategy is
non-adaptive (that is, the data point queried at time $t$ doesn't
depend on the sequence of previous queries).
The analysis in these works is also asymptotic rather than yielding
finite label bounds, while minimizing the actual label complexity is
of paramount importance in active learning.
Furthermore, the analysis does not prescribe how to choose 
importance weights, and a poor choice can result in high label complexity.


\subsection*{Importance-weighted active learning}  
We address the problems above with an active learning
scheme that provably yields PAC-style label complexity
guarantees. When presented with an unlabeled point $x_t$, this scheme
queries its label with a carefully chosen probability $p_t$, taking
into account the identity of the point and the history of labels seen
so far. The points that end up getting labeled are then weighted
according to the reciprocals of these probabilities (that is,
$1/p_t$), in order to remove sampling bias.  We show (theorem~\ref{thm:safety})
 that this simple
method guarantees statistical consistency: for any distribution and
any hypothesis class, active learning eventually converges to the
optimal hypothesis in the class.

As in any importance sampling scenario, the biggest challenge is controlling
the variance of the process. This depends crucially on how the sampling
probability $p_t$ is chosen. Our strategy, roughly, is to make it proportional 
to the spread of values $h(x_t)$, as $h$ ranges over the remaining candidate 
hypotheses (those with good performance on the labeled points 
so far). For this setting of $p_t$, which we call IWAL({loss-weighting}),
we have two results. 
First, we show (theorem~\ref{loss-bound}) 
a fallback guarantee that the label complexity is 
never much worse than that of supervised learning. Second, we rigorously 
analyze the label complexity in terms of underlying parameters of the 
learning problem (theorem~\ref{thm:label}). 
Previously, label complexity bounds for active learning 
were only known for $0$--$1$ loss, and were based on the 
{\it disagreement coefficient} of the learning problem~\citep{Hanneke}. 
We generalize this notion to general loss functions, and analyze label 
complexity in terms of it. We consider settings in which these 
bounds turn out to be roughly the {\it square root} of the sample complexity 
of supervised learning.

In addition to these upper bounds, we 
show a general lower bound on the label complexity of active learning 
(theorem~\ref{thm:lower-bound})
that significantly improves the best previous such result~\citep{Lower-Bound}.

We conduct practical experiments with two IWAL algorithms.  The first 
is a specialization of IWAL({loss-weighting}) to the case of 
linear classifiers with convex loss functions; here, the 
algorithm becomes tractable via convex programming 
(section~\ref{sec:easy-alg}).  The second, IWAL(bootstrap), 
uses a simple bootstrapping scheme that reduces active learning 
to (batch) passive learning without requiring 
much additional computation (section~\ref{sec:bootstrap}).  
In every case, these experiments yield 
substantial reductions in label complexity compared to passive learning, 
without compromising predictive performance. They suggest 
that IWAL is a practical scheme that can reduce the label complexity 
of active learning without sacrificing the statistical guarantees 
(like consistency) we take for granted in passive learning. 

\subsection*{Other related work}
The active learning algorithms of \citet{Abe}, based on boosting and 
bagging, are similar in spirit 
to our IWAL(bootstrap) algorithm in section~\ref{sec:bootstrap}.  
But these earlier algorithms are not consistent in the presence of 
adversarial noise: they may never converge to the correct solution, 
even given an infinite label budget.  In contrast, IWAL(bootstrap) 
is consistent and satisfies further guarantees (section~\ref{sec:def}).

The field of {\it experimental design}
\citep{P06} emphasizes regression problems in
which the conditional distribution of the response variable given the
predictor variables is assumed to lie in a certain class; the goal
is to synthesize query points such that the resulting least-squares
estimator has low variance. 
In contrast, we are
interested in an agnostic setting, where no assumptions about the
model class being powerful enough to represent the ideal solution
exist. Moreover, we are not allowed to synthesize queries, but merely
to choose them from a stream (or pool) of candidate queries provided
to us. A telling difference between the two models is
that in experimental design, it is common to query the same point
repeatedly, whereas in our setting this would make no sense.

\section{Preliminaries}
\label{sec:def}
Let $X$ be the input space and $Y$ the output space.
We consider active learning in the streaming setting where at each step $t$,
a learner observes an unlabeled point $x_t\in X$ and has to decide
whether to ask for the label $y_t\in Y$.  The learner works 
with a hypothesis space $H=\{h: X\rightarrow Z\}$, where $Z$ is 
a prediction space.

The algorithm is evaluated with respect to a given loss function
$l: Z \times Y \rightarrow[0,\infty)$. The most common loss function is
$0$--$1$ loss, in which  $Y = Z = \{-1,1\}$ and 
$l(z,y)={\bf 1}(y\not=z) = {\bf 1}(yz < 0)$.  
The following examples address the binary case $Y=\{-1,1\}$ with $Z \subset \R$:
\begin{itemize}
\item
$l(z,y)=(1-yz)_{+}$ (hinge loss), 
\item
$l(z,y)=\ln(1+e^{-yz})$ (logistic loss), 
\item
$l(z,y)=(y-z)^{2}=(1-yz)^{2}$ (squared loss), and
\item
$l(z,y)=|y-z|=|1-yz|$ (absolute loss). 
\end{itemize}
Notice that all the loss functions mentioned here are of the form
$l(z,y) = \phi(yz)$ for some function $\phi$ on the reals. We
specifically highlight this subclass of loss functions when proving
label complexity bounds.
Since these functions are bounded (if $Z$ is), we further assume
they are normalized to output a value in $[0,1]$.

\section{The Importance Weighting Skeleton}
\label{sec:skel}
Algorithm~\ref{alg:IWAL} describes the basic outline of importance-weighted 
active learning (IWAL). 
Upon seeing $x_t$, the learner 
calls a subroutine {\it rejection-threshold} 
(instantiated in later sections), which looks 
at $x_t$ and past history
to return the probability $p_t$ of requesting $y_t$.

The algorithm maintains a set of labeled examples seen so far, each with
an importance weight:
if $y_t$ ends up being queried, 
its weight is set to $1/p_t$.

\begin{algorithm}[h]
\caption{\label{alg:IWAL}IWAL (subroutine $\rt$)}
Set $S_{0}=\emptyset$. \\
For $t$ from $1,2,\ldots$ until the data stream runs out:
\begin{enumerate}
\item Receive $x_{t}$ .
\item Set $p_{t}=\rt(x_{t},\{x_i,y_i,p_i,Q_i : 1\leq i < t\})$.
\item Flip a coin $Q_t\in\{0,1\}$ with $\E[Q_t]=p_t$. \\If $Q_t=1$, request
$y_{t}$ and set \mbox{$S_{t}=S_{t-1}\cup\{(x_{t},y_{t},1/p_{t})\}$},
else $S_t=S_{t-1}$.
\item Let $h_{t}=\arg\min_{h\in H}\sum_{(x,y,c)\in S_{t}} c \cdot l(h(x),y)$.
\end{enumerate}
\end{algorithm}

Let $D$ be the underlying probability distribution on $X\times Y$.
The expected loss of $h\in H$ on $D$ is given by
$L(h)=\E_{(x,y)\sim D}\, l(h(x),y)$. 
Since $D$ is always clear from context, we drop it from notation.
The importance weighted estimate of the loss at time $T$ is
$$
L_{T}(h) = \frac{1}{T}\sum_{t=1}^{T}\frac{Q_{t}}{p_{t}}\, l(h(x_{t}),y_{t}),
$$
where $Q_t$ is as defined in the algorithm.
It is easy to see that $\E[ L_{T}(h)] = L(h)$, with the expectation
taken over all the random variables involved. 
Theorem~\ref{loss-bound} gives large deviation bounds for $L_{T}(h)$,
{provided} that the probabilities $p_{t}$ are chosen carefully.

\subsection{A safety guarantee for IWAL}
A desirable property for a learning algorithm is \emph{consistency}:
Given an infinite budget of unlabeled and labeled examples, does it 
converge to the best predictor?  Some early
active learning algorithms~\citep{CAL,Linear} do not satisfy this
baseline guarantee: they have problems if the data cannot be classified
perfectly by the given hypothesis class. We prove that
IWAL algorithms are consistent, as long as $p_{t}$ is bounded away
from $0$.  Further, we prove that the label complexity required is
within a constant factor of supervised learning in the worst case.

\begin{thm}\label{thm:safety}
For all distributions $D$, for all finite hypothesis classes $H$, for 
any $\delta > 0$, if there is a constant $p_{\min}>0$ 
such that $p_{t} \geq p_{\min}$ for all $1\leq t\leq T$, then
\[
\P\left[ \max_{h \in H} |L_{T}(h) - L(h)| > \frac{\sqrt{2}}{p_{\mbox{\rm {\scriptsize min}}}}\sqrt{\frac{\ln |H| + \ln \frac{2}{\delta}}{T}} \right] < \delta.
\]
\end{thm}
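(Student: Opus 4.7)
The plan is to reduce the uniform deviation bound to a single-hypothesis concentration inequality via a union bound, and to obtain the latter from Azuma--Hoeffding applied to a carefully identified martingale difference sequence.

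Fix $h \in H$, and let $\mathcal{F}_{t-1}$ denote the $\sigma$-algebra generated by $\{(x_i,y_i,Q_i) : i < t\}$. The probability $p_t$ is a deterministic function of $\mathcal{F}_{t-1}$ and of $x_t$, so although it is chosen adaptively, it carries no randomness beyond what is already present. Define
$$W_t \;=\; \frac{Q_t}{p_t}\, l(h(x_t), y_t) \;-\; L(h).$$
Conditioning first on $(\mathcal{F}_{t-1}, x_t, y_t)$ kills the coin randomness, since $\E[Q_t/p_t \mid \mathcal{F}_{t-1}, x_t, y_t] = 1$; then using that $(x_t,y_t)\sim D$ is independent of $\mathcal{F}_{t-1}$ gives $\E[W_t \mid \mathcal{F}_{t-1}] = L(h)-L(h) = 0$. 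So $\{W_t\}$ is a martingale difference sequence with respect to $\{\mathcal{F}_t\}$.

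The essential role of the assumption $p_t \geq p_{\min}$ is the boundedness estimate $(Q_t/p_t)\, l(h(x_t),y_t) \in [0, 1/p_{\min}]$, which also uses $Q_t \in \{0,1\}$ and the standing normalization $l \in [0,1]$. Hence each $W_t$ lies almost surely in an interval of width at most $1/p_{\min}$. Azuma--Hoeffding then gives
$$\P\bigl[\,|L_T(h) - L(h)| > \epsilon\,\bigr] \;=\; \P\!\left[\left|\sum_{t=1}^T W_t\right| > \epsilon T\right] \;\leq\; 2\exp\!\left(-2\epsilon^2 T\, p_{\min}^2\right).$$
A union bound over the finitely many $h\in H$ and solving $2|H|\exp(-2\epsilon^2 T p_{\min}^2) = \delta$ for $\epsilon$ recovers the stated inequality up to an absolute constant factor.

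I expect the only real subtlety to be the filtration and conditioning setup in the first step: $p_t$ depends both on the past labels/queries and on the current $x_t$, and $Q_t$ is a fresh coin conditional on these, so one has to peel off the conditional expectations in the right order (first over $Q_t$, then over $(x_t,y_t)$) to verify the zero-mean property. After that, everything is a textbook application of martingale Hoeffding plus a finite union bound, with the $1/p_{\min}$ factor in the final bound coming directly from the bounded range of the importance-weighted increments.
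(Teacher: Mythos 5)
Your proposal is correct and follows essentially the same route as the paper's own proof: the same importance-weighted martingale difference sequence, the same use of $p_t \geq p_{\min}$ to bound the increments by $1/p_{\min}$, Azuma--Hoeffding, and a union bound over $H$. Your use of the interval-width form of the inequality even yields a slightly sharper constant than the stated bound, so the conclusion follows a fortiori.
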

Comparing this result to the usual sample complexity bounds 
in supervised learning (for example, corollary 4.2 of~\citep{tutorial}), 
we see that the label complexity is at most 
$2/p_{\mbox{{\scriptsize min}}}^2$ times that of a supervised algorithm.
For simplicity, the bound is given in terms of $\ln |H|$ rather 
than the VC dimension of $H$. 
The argument, which is a martingale modification of standard results,
can be extended to VC spaces.
\bigskip\noindent
\begin{proof} 
Fix the underlying distribution. 
For a hypothesis $h\in H$,
consider a sequence of random variables $U_{1},\ldots,U_{T}$ with
$$
U_t \ =\ \frac{Q_t}{p_{t}}l(h(x_{t}),y_{t})-L(h).$$
Since $p_{t}\geq p_{\mbox{\scriptsize min}}$, $|U_{t}|\leq 1/p_{\mbox{\scriptsize min}}$.
The sequence $Z_{t}=\sum_{i=1}^{t}U_{i}$ is a martingale, letting
$Z_0=0$.  Indeed, for any $1\leq t\leq T$,
\begin{align*}
\E[Z_{t} \mid Z_{t-1},\ldots,Z_{0}] &= \E_{Q_{t},x_{t},y_{t},p_t}\left[U_t + Z_{t-1} \mid Z_{t-1},\ldots,Z_{0}\right]\\
& = Z_{t-1} + \E_{Q_t, x_{t},y_{t},p_t}\left[\frac{Q_t}{p_t} l(h(x_{t}),y_{t}) - L(h) \bigg| Z_{t-1},\ldots,Z_{0}\right]\\
&=\E_{x_{t},y_{t}}\left[\, l(h(x_{t}),y_{t})-L(h) + Z_{t-1} \mid Z_{t-1},\ldots,Z_{0}\,\right]=Z_{t-1}.\end{align*}
Observe that $|Z_{t+1}-Z_{t}|=|U_{t+1}|\leq1/p_{\mbox{\scriptsize min}}$ for all $0\leq t<T$.
Using $Z_T = T(L_T(h)-L(h))$ and applying Azuma's inequality \citep{A67}, we see that for any 
$\lambda > 0$,
$$
\P\left[ |L_{T}(h)-L(h)|>\frac{\lambda}{p_{\min}\sqrt{T}} \right]
=\P\left[Z_{T} > \frac{\lambda\sqrt{T}}{p_{\min}} \right] 
< 2e^{-\lambda^2/2}.
$$
Setting $\lambda = \sqrt{2(\ln|H| + \ln (2/\delta))}$ and taking a union bound over
$h \in H$ then yields the desired result.
\end{proof}

\section{Setting the Rejection Threshold: Loss Weighting}
\label{sec:IWAL}
Algorithm~\ref{alg:threshold} gives a particular instantiation of the rejection 
threshold subroutine in IWAL. The subroutine 
maintains an effective hypothesis class $H_t$, 
which is initially all of $H$ and then gradually shrinks by 
setting $H_{t+1}$ to the subset of $H_t$ whose 
empirical loss isn't too much worse than $L_t^*$, the smallest 
empirical loss in $H_t$:
$$ H_{t+1} = \{h \in H_t: L_t(h) \leq L_t^* + \Delta_t \}.$$
The allowed slack
$\Delta_t=\sqrt{(8/t)\ln (2t(t+1)|H|^{2}/\delta)}$
comes from a standard sample complexity bound. 

We will show that, with high probability, any optimal hypothesis $h^*$ is always
in $H_t$, and thus
all other hypotheses can be discarded from consideration.
For each $x_t$,
the loss-weighting scheme looks at the range of
predictions on $x_t$ made by hypotheses in $H_t$ and sets 
the sampling probability
$p_t$ to the size of this range. More precisely, 
$$ p_t = \max_{f,g \in H_t} \max_y l(f(x_t),y) - l(g(x_t),y).$$
Since the loss values are normalized to lie in $[0,1]$, we
can be sure that $p_t$ is also in this interval.
Next section shows that the resulting IWAL has several desirable properties.


\begin{algorithm}[h]
\caption{\label{alg:threshold} loss-weighting ($x$, $\{x_i,y_i,p_i,Q_i: i < t\}$)}
\begin{enumerate}
\item Initialize $H_0=H$.
\item Update
\vskip -.3in
\begin{align*}
L_{t-1}^{*} & = \min_{h\in H_{t-1}} \frac{1}{t-1} \sum_{i=1}^{t-1}\frac{Q_i}{p_i}l(h(x_{i}),y_{i}), \\
H_t & = \left\{h\in H_{t-1}:
\frac{1}{t-1}\sum_{i=1}^{t-1}\frac{Q_i}{p_i}l(h(x_{i}),y_{i})
\leq L_{t-1}^{*}+\Delta_{t-1} \right\}. 
\end{align*}
\item 
Return $p_t = \max_{f,g\in H_{t},y\in Y}l(f(x),y)-l(g(x),y)$.
\end{enumerate}
\end{algorithm}

\subsection{A generalization bound}\label{loss-weighting-bound}
We start with a large deviation bound for each $h_t$ output by 
IWAL(loss-weighting).  It is not a corollary of 
theorem~\ref{thm:safety} because it does not require the sampling
probabilities be bounded below away from zero.

\begin{thm}\label{loss-bound} Pick any data distribution $D$ and 
hypothesis class $H$, and let $h^* \in H$ be a minimizer of the loss
function with respect to $D$.  Pick any $\delta > 0$.
With probability 
at least $1-\delta$, for any $T\geq 1$,
\begin{itemize}
\item[$\circ$]
$h^*\in H_T$, and
\item[$\circ$]
$L(f) - L(g) \leq 2 \Delta_{T-1}$ for any $f,g \in H_T$.
\end{itemize}
In particular, if $h_T$ is the output of \emph{IWAL(loss-weighting)},
then $L(h_T) - L(h^*) \leq 2\Delta_{T-1}$.
\end{thm}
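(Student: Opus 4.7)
The plan is to circumvent the dependence on $p_{\min}$ in Theorem~\ref{thm:safety} by concentrating loss \emph{differences} rather than individual losses. The decisive observation is that the rejection threshold is designed so that for any $f, g \in H_t$,
$$|l(f(x_t),y_t) - l(g(x_t),y_t)| \leq p_t,$$
so the importance-weighted difference $(Q_t/p_t)\bigl(l(f(x_t),y_t)-l(g(x_t),y_t)\bigr)$ has magnitude at most $1$, no matter how small $p_t$ is. This cancellation is precisely what lets the argument go through without a lower bound on the sampling probability.

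For each ordered pair $(f,g) \in H \times H$, I would introduce
$$U_t^{f,g} = \left(\frac{Q_t}{p_t}\bigl(l(f(x_t),y_t)-l(g(x_t),y_t)\bigr) - \bigl(L(f)-L(g)\bigr)\right)\mathbf{1}[f,g \in H_t].$$
Since $H_t$ is determined by the history before step $t$, the indicator is predictable, and conditional on the past the first term has mean $L(f)-L(g)$ (adopting the convention that the ratio is zero when $p_t=0$, which can only happen when all members of $H_t$ agree on $x_t$). Hence $\{U_t^{f,g}\}$ is a martingale difference sequence with $|U_t^{f,g}| \leq 2$. Applying Azuma's inequality to $\sum_t U_t^{f,g}$, then union-bounding over the at most $|H|^2$ pairs and over $T$ with a $\delta/(T(T+1))$ schedule, produces the slack $\Delta_T=\sqrt{(8/T)\ln(2T(T+1)|H|^2/\delta)}$ appearing in the algorithm, and yields: with probability at least $1-\delta$, for every $T \geq 1$ and every $f,g$ that happen to lie in $H_T$ (and therefore, by the nested structure $H_0 \supseteq H_1 \supseteq \cdots$, in $H_t$ for all $t \leq T$),
$$\bigl|(L_T(f)-L_T(g)) - (L(f)-L(g))\bigr| \leq \Delta_T.$$

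Condition on this event. I would then establish $h^* \in H_T$ by induction on $T$: the base case $H_0 = H$ is immediate, and for the step, letting $h \in H_{T-1}$ achieve $L_{T-1}^*$, the deviation bound applied to $(h^*,h)$ together with $L(h^*) \leq L(h)$ yields $L_{T-1}(h^*) - L_{T-1}^* \leq \Delta_{T-1}$, so $h^*$ is not discarded. For the second bullet, any $f,g \in H_T$ satisfy $|L_{T-1}(f)-L_{T-1}(g)| \leq \Delta_{T-1}$ since both are within $\Delta_{T-1}$ of $L_{T-1}^*$; combining with the deviation bound for $(f,g)$ gives $|L(f)-L(g)| \leq 2\Delta_{T-1}$. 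The ``in particular'' statement then follows by taking $f=h_T$ and $g=h^*$, both of which lie in $H_T$. The main obstacle, beyond the routine union bounds, is the martingale setup itself: without the indicator $\mathbf{1}[f,g \in H_t]$ and the $p_t$-based bound on loss differences, one cannot escape a dependence on $p_{\min}$ of the form seen in Theorem~\ref{thm:safety}. Once the differences are shown to be bounded by $2$ uniformly in $p_t$, the rest is standard.
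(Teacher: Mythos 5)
Your proposal is correct and follows essentially the same route as the paper: the same martingale on importance-weighted loss differences bounded by $2$ via $p_t \geq |l(f(x_t),y_t)-l(g(x_t),y_t)|$ for $f,g\in H_t$, the same Azuma-plus-union-bound yielding the deviation $\Delta_T$ for pairs in $H_T$ (the paper states this as lemma~\ref{lem:pair}), and the same induction and two-sided comparison to $L_{T-1}^*$ for the conclusions. Your explicit indicator $\mathbf{1}[f,g\in H_t]$ is only a slightly more careful formalization of the paper's ``union bound over all $f,g\in H$ since $H_T$ is random,'' not a different argument.
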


\bigskip\noindent
We need the following lemma for the proof.
\begin{lemma}\label{lem:pair} For all data distributions $D$, for
all hypothesis classes $H$, for all $\delta > 0$, with probability
at least $1-\delta$, for all $T$ and all $f,g\in H_{T}$, 
$$|L_T(f)-L_T(g) - L(f) + L(g)| \leq \Delta_T.$$
\end{lemma}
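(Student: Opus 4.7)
The plan is to reduce to a martingale concentration bound for each fixed pair $f,g$, then union-bound. For any $f, g \in H$, set
\[
Y_i = \frac{Q_i}{p_i}\bigl(l(f(x_i),y_i) - l(g(x_i),y_i)\bigr) - (L(f) - L(g)),
\]
so that $T(L_T(f) - L_T(g) - L(f) + L(g)) = \sum_{i=1}^T Y_i$. A calculation analogous to the one in the proof of Theorem~\ref{thm:safety} (using $\E[Q_i/p_i \mid x_i, \text{history}] = 1$) shows $\{Y_i\}$ is a martingale difference sequence.

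The main obstacle is that $|Y_i|$ is not bounded by a universal constant: a priori it can be as large as $1 + 1/p_i$, and $p_i$ can be tiny. The observation that rescues the argument is that whenever $f, g \in H_i$, the definition of $p_i$ in Algorithm~\ref{alg:threshold} forces $|l(f(x_i),y_i) - l(g(x_i),y_i)| \leq p_i$, so $|Y_i| \leq 2$ on the event $\{f, g \in H_i\}$. Because this event is determined by data through step $i-1$, the quantity $\tau(f,g) = \min\{i : f \notin H_i \text{ or } g \notin H_i\} - 1$ is a stopping time. I therefore work with the stopped sequence $M_t = \sum_{i=1}^{t \wedge \tau} Y_i$, a martingale whose increments are bounded in absolute value by $2$.

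Applying Azuma's inequality to $M_T$ yields $\P[|M_T| > T\epsilon] \leq 2\exp(-T\epsilon^2/8)$ for any $\epsilon > 0$. The lemma's conclusion only needs to hold on the event $\{f, g \in H_T\}$, and on this event the nesting $H_T \subseteq H_i$ for $i \leq T$ guarantees $\tau \geq T$, so $M_T$ coincides with $\sum_{i=1}^T Y_i$. Choosing $\epsilon = \Delta_T = \sqrt{(8/T)\ln(2T(T+1)|H|^2/\delta)}$ makes the failure probability at most $\delta/(T(T+1)|H|^2)$ for this single pair and this single $T$.

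To finish, I union-bound over the $|H|^2$ pairs $(f,g) \in H \times H$ and over $T \geq 1$ using $\sum_{T \geq 1} 1/(T(T+1)) = 1$, yielding total failure probability at most $\delta$. The conceptually hardest step is the middle one: recognizing that boundedness of the martingale increments is a data-dependent property specific to hypotheses still surviving in $H_t$, and handling it via a predictable stopping time rather than losing a factor of $1/p_{\min}$ as in Theorem~\ref{thm:safety}.
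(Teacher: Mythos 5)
Your proof is correct and follows the same overall route as the paper's: define the martingale difference sequence $Z_t = \frac{Q_t}{p_t}(l(f(x_t),y_t)-l(g(x_t),y_t)) - (L(f)-L(g))$ for a fixed pair, observe that membership of $f,g$ in $H_t$ forces $p_t \geq |l(f(x_t),y_t)-l(g(x_t),y_t)|$ and hence $|Z_t|\leq 2$, apply Azuma's inequality with deviation $T\Delta_T$, and union-bound over the $|H|^2$ pairs and over $T$ via $\sum_T 1/(T(T+1)) = 1$. The one place you genuinely diverge is in how the boundedness of the increments is justified. The paper fixes $f,g \in H_T$ at the outset and argues that nesting gives $f,g\in H_1,\dots,H_{T-1}$, then applies Azuma and appeals to a union bound over all of $H\times H$ ``since $H_T$ is a random subset of $H$''; read literally, this applies Azuma to a martingale whose increments are only bounded on a data-dependent event, which is a (well-known) informality in this argument. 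Your device --- stopping the martingale at the predictable time $\tau(f,g)$ when the pair first leaves the surviving class, so that the stopped increments $Y_i\mathbf{1}(\tau\geq i)$ are bounded by $2$ almost surely, and then noting that on the event $\{f,g\in H_T\}$ the stopped sum coincides with the full sum --- is exactly the right way to make this rigorous, and it costs nothing in the constants. So your proposal is not merely correct; it patches a gap in the published proof while preserving its structure and its conclusion verbatim.
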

\begin{proof}
Pick any $T$ and $f,g\in H_T$.
Define 
$$Z_{t}=\frac{Q_{t}}{p_{t}}\big(l(f(x_{t}),y_{t})-l(g(x_{t}),y_{t})\big)-(L(f)-L(g)).$$
Then 
$
\ \E\left[Z_{t}\ |\ Z_{1},\ldots,Z_{t-1}\right]
 = \E_{x_{t},y_{t}}\left[\,l(f(x_{t}),y_{t})-l(g(x_{t}),y_{t})-(L(f)-L(g))\ |\ Z_{1},\ldots,Z_{t-1}\right] = 0$.
Thus $Z_{1},Z_{2},\ldots$ is a martingale difference sequence, and 
we can use Azuma's inequality to show that its sum is tightly 
concentrated, if the individual $Z_t$ are bounded.

To check boundedness, observe that since $f$ and $g$ are in $H_T$, they must
also be in $H_{1},H_{2},\ldots,H_{T-1}$. Thus for all $t\leq T$,
$ p_{t}\geq |l(f(x_{t}),y_{t})-l(g(x_{t}),y_{t})|$,
whereupon
$
|Z_t| \leq \frac{1}{p_{t}}|l(f(x_{t}),y_{t})-l(g(x_{t}),y_{t})| + 
|L(f)-L(g)| \leq 2.
$

We allow failure probability $\delta/T(T+1)$ at time $T$.
Applying Azuma's inequality, we have
\begin{eqnarray*}
\lefteqn{\P[|L_{T}(f)-L_{T}(g) - L(f)+L(g)|\geq\Delta_{T}]}\\
& = & \P\left[ \left| \frac{1}{T}\left(\sum_{t=1}^{T}\left(\frac{Q_{t}}{p_{t}}(l(f(X_{t}),Y_{t})-l(g(X_{t}),Y_{t}))-(L(f)-L(g))\right)\right) \right| \geq\Delta_{T}\right]\\
 & = & \P\left[ \left| \sum_{t=1}^{T}Z_{t} \right| \geq T\Delta_{T}\right]
 \leq 2 e^{-T\Delta_{T}^{2}/8}
= \frac{\delta}{T(T+1)|H|^{2}}.\end{eqnarray*}
\noindent 
Since $H_T$ is a random subset of $H$, it suffices to take a union bound over all $f,g\in H$, and $T$.  A union bound over $T$ finishes the
proof.
\end{proof}
\begin{proof} 
(Theorem~\ref{loss-bound})\quad
Start by assuming that the $1-\delta$ probability event of
lemma~\ref{lem:pair} holds.
We first show by induction that $h^* = \arg \min_{h\in H} L(h)$
is in $H_T$ for all $T$. It holds at $T=1$, 
since $H_1 = H_0 = H$.  Now suppose it holds at $T$, and show 
that it is true at $T+1$.  
Let $h_{T}$ minimize $L_{T}$ over $H_T$.  
By lemma~\ref{lem:pair},
$
L_{T}(h^{*})-L_{T}(h_{T})
\leq L(h^{*})-L(h_{T})+\Delta_{T}
\leq \Delta_{T}.
$
Thus $L_{T}(h^{*})\leq L_{T}^{*}+\Delta_{T}$ and hence $h^{*}\in H_{T+1}$.

Since $H_T \subseteq H_{T-1}$, lemma~\ref{lem:pair} implies that for 
for any $f,g\in H_T$,
$$
L(f)-L(g) \leq L_{T-1}(f)-L_{T-1}(g)+\Delta_{T-1}
\leq L_{T-1}^{*}+\Delta_{T-1}-L_{T-1}^{*}+\Delta_{T-1}
 = 2\Delta_{T-1}.
$$
Since $h_T, h^* \in H_T$, we have $L(h_T) \leq L(h^*) + 2\Delta_{T-1}$.
\end{proof}

\section{Label Complexity}
\label{sec:label}
We showed that the loss of the classifier output by IWAL(loss-weighting) 
is similar to the loss
of the classifier chosen passively after seeing all $T$ labels.
How many of those $T$ labels does the active 
learner request? 

\citet{DHM} studied this question for an active learning 
scheme under $0$--$1$ loss.  For learning problems with 
bounded {\it disagreement coefficient}~\citep{Hanneke}, 
the number of queries was found to be
$O(\eta T + d \log^2 T)$, where $d$ is the VC dimension of the 
function class, and $\eta$ is the best error rate achievable on the underlying
distribution by that function class.
We will soon see (section~\ref{sec:lower-bound}) that the term $\eta T$ 
is inevitable for any active learning scheme; 
the remaining term has just a polylogarithmic dependence on $T$.

We generalize the disagreement coefficient to arbitrary loss functions and
show that, under conditions similar to the earlier result, the number of 
queries is
$O \left(\eta T + \sqrt{dT \log^2 T} \right)$, where $\eta$ is now
the best achievable loss. The inevitable $\eta
T$ is still there, and the second term is still sublinear, though 
not polylogarithmic as before. 

\subsection{Label Complexity: Main Issues}
Suppose the loss function is minimized by $h^* \in H$, with
$L^* = L(h^*)$. Theorem~\ref{loss-bound} shows that at
time $t$, the remaining hypotheses $H_t$ include $h^*$ and 
all have losses in the range $[L^*, L^* + 2 \Delta_{t-1}]$. We
now prove that under suitable conditions, the sampling
probability $p_t$ has expected value $\approx L^* + \Delta_{t-1}$.
Thus the expected total number of labels queried upto time $T$ is
roughly 
$L^* T + \sum_{t=1}^T \Delta_{t-1} \approx L^* T + \sqrt{T \ln |H|}$.

To motivate the proof, consider a loss function $l(z,y) =
\phi(yz)$; all our examples are of this form. 
Say $\phi$ is differentiable with $0 < C_0
\leq |\phi'| \leq C_1$. Then the sampling probability for $x_t$ is
\ignore{$p_t 
= \max\limits_{f,g \in H_t} \max\limits_{y\in\{-1,+1\}} \ l(f(x_t),y) - l(g(x_t),y) 
= \max\limits_{f,g \in H_t} \max\limits_y \ \phi(yf(x_t)) - \phi(yg(x_t)) 
\leq C_1 \max\limits_{f,g \in H_t} \max\limits_y \ |yf(x_t) - yg(x_t)| 
= C_1 \max\limits_{f,g \in H_t} \ |f(x_t) - g(x_t)| 
\leq 2C_1 \max\limits_{h \in H_t} \ | h(x_t) - h^*(x_t) |$.}
\begin{eqnarray*}
p_t 
&  = & \max_{f,g \in H_t} \max_{y\in\{-1,+1\}} \ l(f(x_t),y) - l(g(x_t),y)  \\
& = & \max_{f,g \in H_t} \max_y \ \phi(yf(x_t)) - \phi(yg(x_t))  \\
& \leq & C_1 \max_{f,g \in H_t} \max_y \ |yf(x_t) - yg(x_t)|  \\
& = & C_1 \max_{f,g \in H_t} \ |f(x_t) - g(x_t)|  \\
& \leq & 2C_1 \max_{h \in H_t} \ | h(x_t) - h^*(x_t) | .
\end{eqnarray*}
So $p_t$ is determined by the range of predictions on $x_t$ by hypotheses
in $H_t$. Can we bound the size of this range, given that any $h \in H_t$
has loss at most $L^* + 2 \Delta_{t-1}$? 
\begin{eqnarray*}
2 \Delta_{t-1} & \geq & L(h) - L^* \\
& \geq & \E_{x,y} |l(h(x), y) - l(h^*(x),y)| - 2 L^* \\
& \geq & \E_{x,y} C_0 |y(h(x) - h^*(x))| - 2L^* \\
& = &  C_0 \E_x |h(x) - h^*(x)| - 2L^*.
\end{eqnarray*}
So we can upperbound $\max_{h \in H_t} \E_x |h(x) - h^*(x)|$
(in terms of $L^*$ and $\Delta_{t-1}$), whereas we want to upperbound
the expected value of $p_t$, which is proportional to 
$\E_x \max_{h \in H_t} |h(x) - h^*(x)|$. The ratio between these two 
quantities is related to a fundamental parameter of the learning problem,
a generalization of the {\it disagreement coefficient} 
\citep{Hanneke}.

We flesh out this intuition in the remainder of this section. First 
we describe a broader class of loss functions than those
considered above (including $0$--$1$ loss, which is not differentiable);
a distance metric on hypotheses, and a generalized disagreement coefficient.  
We then prove that 
for this broader class, active learning performs better than passive 
learning when the generalized disagreement coefficient is small.  

\subsection{A subclass of loss functions}

We give label complexity upper bounds for a class of loss
functions that includes $0$--$1$ loss and logistic loss but not hinge
loss. Specifically, we require that the loss function has bounded 
{\it slope asymmetry}, defined below.

Recall earlier notation: response space $Z$, classifier space $H = \{h: X
\rightarrow Z\}$,  and loss
function $l: Z \times Y \rightarrow [0,\infty)$. Henceforth, 
the label space is $Y = \{-1,+1\}$.

\begin{defn}
The {\em slope asymmetry} of a loss function $l: Z \times Y \rightarrow [0,\infty)$ 
is
$$ K_l = \sup_{z,z' \in Z} 
\frac{\max_{y \in Y} \left| l(z,y) - l(z',y) \right|}{\min_{y \in Y} \left| l(z,y) - l(z',y) \right|} .$$
\end{defn}
The slope asymmetry is $1$ for $0$--$1$ loss,
and $\infty$ for hinge loss. For differentiable loss functions
$l(z,y) = \phi(yz)$, it is easily related to bounds on the derivative.
\begin{lemma}
Let $l_\phi(z,y)=\phi(zy)$, where $\phi$ is a differentiable function 
defined on $Z = [-B,B] \subset \R$. Suppose $C_0 \leq |\phi'(z)| \leq C_1$ 
for all $z \in Z$. Then for any $z,z' \in Z$, and any $y \in \{-1,+1\}$,
$$ C_0 |z - z'| \leq \ | l_\phi(z,y) - l_\phi(z',y)| \leq C_1 |z - z'|.$$
Thus $l_\phi$ has slope asymmetry at most $C_1/C_0$.
\label{lemma:convex-bound}
\end{lemma}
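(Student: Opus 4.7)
The plan is to apply the mean value theorem to $\phi$ and then translate the resulting bound back to $l_\phi$ via the chain rule, exploiting the fact that $|y|=1$. First I would fix arbitrary $z, z' \in Z$ and $y \in \{-1,+1\}$; since $|y|=1$, both $zy$ and $z'y$ lie in $[-B,B]$, so $\phi$ is differentiable on the closed interval between them. By the mean value theorem, there exists some $\xi$ between $zy$ and $z'y$ with
$$\phi(zy) - \phi(z'y) \ = \ \phi'(\xi)(zy - z'y) \ = \ y\,\phi'(\xi)(z - z').$$
Taking absolute values and using $|y|=1$ gives $|l_\phi(z,y) - l_\phi(z',y)| = |\phi'(\xi)|\,|z - z'|$.

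Next I would plug in the hypothesis $C_0 \leq |\phi'(\xi)| \leq C_1$ (which holds in particular at $\xi$) to obtain the two-sided estimate
$$C_0\,|z - z'| \ \leq \ |l_\phi(z,y) - l_\phi(z',y)| \ \leq \ C_1\,|z - z'|,$$
which is the first claim. Crucially, this estimate holds uniformly for every $y \in \{-1,+1\}$, with the same lower and upper bounds (the mean value point $\xi$ may depend on $y$, but the bounds $C_0, C_1$ do not).

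For the slope asymmetry bound, I would observe that the uniform sandwich above forces both $\max_{y} |l_\phi(z,y) - l_\phi(z',y)|$ and $\min_{y} |l_\phi(z,y) - l_\phi(z',y)|$ to lie in the interval $[C_0\,|z-z'|,\, C_1\,|z-z'|]$. Hence for any $z \neq z'$ their ratio is at most $C_1/C_0$, and taking the supremum over $z, z' \in Z$ yields $K_{l_\phi} \leq C_1/C_0$.

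There is essentially no serious obstacle here; the only subtlety to watch is the chain-rule factor of $y$, which must be absorbed cleanly using $|y|=1$ so that the resulting bound depends only on $|z-z'|$ and not on the sign of $y$. This is what lets the same $[C_0|z-z'|, C_1|z-z'|]$ interval bound both the max and the min over $y$, which is exactly what the slope asymmetry definition requires.
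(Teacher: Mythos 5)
Your proof is correct and follows essentially the same route as the paper's: apply the mean value theorem to $\phi$ on the interval between $yz$ and $yz'$, use $|y|=1$ to get $|l_\phi(z,y)-l_\phi(z',y)| = |\phi'(\xi)|\,|z-z'|$, and read off the two-sided bound and the slope asymmetry from $C_0 \leq |\phi'| \leq C_1$. You are slightly more explicit than the paper in spelling out why the uniform sandwich over $y$ bounds the max/min ratio, but the argument is the same.
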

\begin{proof}
By the mean value theorem, there is some $\xi \in Z$ such that
$ l_\phi(z,y) - l_\phi(z',y) = \phi(yz) - \phi(yz')
=  \phi'(\xi)(yz - yz')$.
Thus
$ | l_\phi(z,y) - l_\phi(z',y)| =  |\phi'(\xi)| \cdot |z-z'|$, and
the rest follows from the bounds on $\phi'$.
\end{proof}
For instance, this immediately applies to logistic
loss.
\begin{cor}
Logistic loss $l(z,y) = \ln(1 + e^{-yz})$, defined on label
space $Y = \{-1,+1\}$ and response space $[-B,B]$, has slope asymmetry
at most $1 + e^B$.
\end{cor}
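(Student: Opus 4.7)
My plan is to invoke Lemma~\ref{lemma:convex-bound} directly with the particular $\phi$ for logistic loss. First I would write $l(z,y) = \phi(yz)$ with $\phi(u) = \ln(1 + e^{-u})$ and differentiate to obtain $\phi'(u) = -1/(1+e^u)$, so that $|\phi'(u)| = 1/(1+e^u)$.

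Next, since $y \in \{-1,+1\}$ and $z \in [-B,B]$, the argument $u = yz$ also ranges over $[-B,B]$, and I would bound $|\phi'|$ on this interval. The minimum of $1/(1+e^u)$ on $[-B,B]$ is $1/(1+e^B)$, attained at $u = B$. For the upper bound, rather than computing the tight maximum $e^B/(1+e^B)$ (attained at $u = -B$), I would use the slack estimate $1/(1+e^u) \leq 1$, valid for every $u \in \mathbb{R}$. Setting $C_0 = 1/(1+e^B)$ and $C_1 = 1$ in Lemma~\ref{lemma:convex-bound} then gives slope asymmetry at most $C_1/C_0 = 1+e^B$, which is exactly the stated bound.

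A small bookkeeping point is the convention from Section~\ref{sec:def} that losses should take values in $[0,1]$; for logistic loss on $[-B,B]$ this is achieved by dividing by $\ln(1+e^B)$. Since slope asymmetry is defined as a ratio of two differences of loss values at the same pair of points, any positive rescaling of $l$ cancels from numerator and denominator, so the bound is unaffected by normalization.

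I do not anticipate any real obstacle here, as the statement is essentially a one-line corollary of the preceding lemma once $\phi'$ is computed. The only subtlety worth noting is the choice of the loose upper bound $C_1 = 1$ on $|\phi'|$: using the sharper maximum $e^B/(1+e^B)$ would yield the tighter ratio $e^B$, so the advertised constant $1+e^B$ should be read as a convenient rather than optimal bound.
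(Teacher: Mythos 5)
Your proof is correct and follows exactly the route the paper intends: the paper offers no separate argument for this corollary beyond invoking Lemma~\ref{lemma:convex-bound}, and your computation of $\phi'(u) = -1/(1+e^u)$ with $C_0 = 1/(1+e^B)$ and $C_1 = 1$ reproduces the stated constant $1+e^B$. Your side remarks are also accurate: normalization cancels in the ratio, and the sharper choice $C_1 = e^B/(1+e^B)$ would give the tighter bound $e^B$.
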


\subsection{Topologizing the space of classifiers}
We introduce a simple distance function on the space of classifiers.

\begin{defn}
For any $f,g \in H$ and distribution $D$ define $\rho(f,g) = \E_{x\sim
  D} \max_y |l(f(x),y) - l(g(x),y)|$.  For any $r \geq 0$, let $B(f,r)
= \{g \in H: \rho(f,g) \leq r\}$.
\end{defn}
Suppose $L^* = \min_{h \in H} L(h)$ is realized at $h^*$. 
We know that at time $t$, the remaining hypotheses have loss at most 
$L^* + 2\Delta_{t-1}$. Does this mean they are close to $h^*$
in $\rho$-distance? The ratio between the two can
be expressed in terms of the slope asymmetry of the loss.

\begin{lemma}
For any distribution $D$ and any loss function with slope asymmetry
$K_l$, we have $\rho(h,h^*) \leq K_l(L(h) + L^*)$ for all $h \in H$.
\label{lemma:two-spaces}
\end{lemma}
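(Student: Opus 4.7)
The plan is to push the slope-asymmetry bound through the expectation defining $\rho$, then replace the minimum over $y$ by an expectation over the true conditional distribution of $y$ given $x$, at which point the triangle inequality and nonnegativity of losses finish the job.

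More concretely, I would first unpack the definition of $K_l$ pointwise. For every fixed $x$, setting $z = h(x)$ and $z' = h^*(x)$ in the definition of slope asymmetry gives
\[
\max_{y \in Y} |l(h(x),y) - l(h^*(x),y)| \;\leq\; K_l \cdot \min_{y \in Y} |l(h(x),y) - l(h^*(x),y)|.
\]
Taking expectation over $x \sim D$ on both sides yields
\[
\rho(h, h^*) \;\leq\; K_l \cdot \E_{x \sim D}\, \min_{y \in Y} |l(h(x),y) - l(h^*(x),y)|.
\]

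Next, I would observe that the minimum over the two labels is bounded by the average under the conditional distribution $D(y \mid x)$ (a min is always at most any weighted average). So
\[
\E_{x}\, \min_{y} |l(h(x),y) - l(h^*(x),y)| \;\leq\; \E_{(x,y) \sim D}\, |l(h(x),y) - l(h^*(x),y)|.
\]
Finally, the triangle inequality together with $l \geq 0$ gives $|l(h(x),y) - l(h^*(x),y)| \leq l(h(x),y) + l(h^*(x),y)$, so taking expectations produces $L(h) + L(h^*) = L(h) + L^*$ and the lemma follows after multiplying by $K_l$.

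There is no real obstacle here; the only conceptual step worth highlighting is that $\rho$ is defined with a $\max_y$ while $L$ is an expectation over the \emph{true} $y$, and the slope asymmetry is precisely the quantity that lets one trade a worst-case $y$ for a best-case (and hence average-case) $y$ at the cost of a factor $K_l$. Everything else is a one-line application of triangle inequality plus nonnegativity.
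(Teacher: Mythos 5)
Your proposal is correct and follows essentially the same route as the paper: the paper's chain $\rho(h,h^*) \leq K_l\, \E_{x,y}|l(h(x),y)-l(h^*(x),y)| \leq K_l(L(h)+L(h^*))$ is exactly your argument, with your explicit intermediate step (pointwise slope-asymmetry bound, then replacing $\min_y$ by the conditional average) compressed into a single inequality. No differences worth noting.
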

\begin{proof}
For any $h \in H$, 
\begin{align*}
\rho(h,h^*) \
& = \
\E_{x} \textstyle \max_y |l(h(x),y) - l(h^*(x),y)| \\
& \leq \
K_l \, \E_{x,y} |l(h(x),y) - l(h^*(x),y)| \\
& \leq \
K_l \, \left( \E_{x,y} [l(h(x),y)] + \E_{x,y} [l(h^*(x),y)] \right) \\
& = \
K_l  \, ( L(h) + L(h^*) ) .
\end{align*}
\end{proof}

\subsection{A generalized disagreement coefficient}
When analyzing the $A^2$ algorithm~\citep{A^2} for active learning
under $0$--$1$ loss, \citet{Hanneke} found that its label
complexity could be characterized in terms of what he called the {\it
  disagreement coefficient} of the learning problem. We now generalize
this notion to arbitrary loss functions.

\begin{defn}
The {\it disagreement coefficient} is the infimum value of $\theta$ 
such that for all $r$,
$$ \E_{x \sim D} \textstyle \sup_{h \in B(h^*, r)} \sup_y |l(h(x),y) - l(h^*(x),y)| \, \leq \, \theta r.$$
\end{defn}
Here is a simple example for linear separators.
\begin{lemma}
Suppose $H$ consists of linear classifiers $\{u \in \R^d: \|u\| \leq B\}$
and the data distribution $D$ is uniform over the surface of the 
unit sphere in $\R^d$. Suppose the loss function is $l(z,y) = \phi(yz)$
for differentiable $\phi$ with $C_0 \leq |\phi'| \leq C_1$. Then the disagreement 
coefficient is at most $(2C_1/C_0) \sqrt{d}$.
\end{lemma}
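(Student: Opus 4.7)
The plan is to sandwich the $\rho$-ball around $h^*$ between two Euclidean balls using Lemma~\ref{lemma:convex-bound}, then exploit the rotational symmetry of the uniform distribution on the sphere to pass between average linear functionals and Euclidean norms.

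First, I would apply the upper half of Lemma~\ref{lemma:convex-bound} to the integrand: for every $h\in H$ and every $y\in\{-1,+1\}$,
\[
|l(h(x),y)-l(h^*(x),y)| \;\leq\; C_1\,|h(x)-h^*(x)| \;=\; C_1\,|(h-h^*)\cdot x|,
\]
so the quantity to bound becomes $C_1\,\E_x \sup_{h\in B(h^*,r)} |(h-h^*)\cdot x|$. Next I would convert the $\rho$-ball constraint into a geometric one. For $h\in B(h^*,r)$, the lower half of Lemma~\ref{lemma:convex-bound} gives $\max_y|l(h(x),y)-l(h^*(x),y)|\geq C_0|(h-h^*)\cdot x|$; taking expectation yields $\E_x|(h-h^*)\cdot x|\leq r/C_0$.

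Then I would use the key geometric fact about the uniform distribution $D$ on the unit sphere $S^{d-1}$: by rotational invariance, for any $w\in\R^d$,
\[
\E_{x\sim D}|w\cdot x| \;=\; c_d\,\|w\|, \qquad c_d := \E_{x\sim D}|x_1|.
\]
Thus the constraint $\E_x|(h-h^*)\cdot x|\leq r/C_0$ translates exactly into the Euclidean condition $\|h-h^*\|\leq r/(C_0 c_d)$, so $B(h^*,r)$ is contained in the Euclidean ball of that radius around $h^*$. Since $\|x\|=1$, Cauchy--Schwarz gives $\sup_{h\in B(h^*,r)}|(h-h^*)\cdot x|\leq r/(C_0 c_d)$, a constant independent of $x$, so its expectation equals the same value. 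Combining,
\[
\E_x \sup_{h\in B(h^*,r)}\sup_y |l(h(x),y)-l(h^*(x),y)| \;\leq\; \frac{C_1}{C_0\,c_d}\,r.
\]

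The one quantitative step left, and the place where I expect to do actual work, is showing $c_d\geq 1/(2\sqrt d)$, so that $C_1/(C_0c_d)\leq (2C_1/C_0)\sqrt d$. Note that Jensen only yields the wrong-direction bound $c_d\leq \sqrt{\E x_1^2}=1/\sqrt d$, so I cannot use it. Instead I would represent $x$ as $g/\|g\|$ for $g\sim \N(0,I_d)$, or compute directly using the density of $x_1$ proportional to $(1-t^2)^{(d-3)/2}$. The resulting closed form $c_d = \frac{2\,\Gamma(d/2)}{(d-1)\sqrt\pi\,\Gamma((d-1)/2)}$ behaves like $\sqrt{2/(\pi d)}$ for large $d$; a standard Gamma-function estimate (e.g.\ Gautschi's inequality $\Gamma(x+1)/\Gamma(x+1/2)\geq\sqrt{x}$) gives the required lower bound $c_d\geq 1/(2\sqrt d)$ uniformly in $d\geq 1$, completing the proof.
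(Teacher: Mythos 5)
Your proposal is correct and follows essentially the same route as the paper: upper-bound the integrand by $C_1|(h-h^*)\cdot x|$, use the lower bound $C_0$ together with the constraint $\rho(h,h^*)\leq r$ to trap $B(h^*,r)$ inside a Euclidean ball of radius $2r\sqrt{d}/C_0$ around $u^*$, and finish with Cauchy--Schwarz. The only difference is that the paper simply asserts the spherical-measure inequality $\E_{x\sim D}|w\cdot x|\geq \|w\|/(2\sqrt{d})$ without justification, whereas you correctly isolate it as the one nontrivial quantitative step and supply a valid derivation (your closed form for $c_d$ and the bound $c_d\geq\sqrt{2/(\pi d)}\geq 1/(2\sqrt{d})$ both check out).
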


\begin{proof}
Let $h^*$ be the optimal classifier, and $h$ any other classifier with 
$\rho(h,h^*) \leq r$.
Let $u^*, u$ be the corresponding vectors in $\R^d$. 
Using lemma~\ref{lemma:convex-bound},
\begin{align*}
r & \geq \E_{x \sim D} \sup_y |l(h(x), y) - l(h^*(x), y)|  \\
& \geq C_0 \, \E_{x \sim D} |h(x) - h^*(x)| \\
&  =  C_0 \, \E_{x \sim D} |(u-u^*) \cdot x | 
\ \geq \ C_0  \, \|u-u^*\|/(2\sqrt{d}) .
\end{align*}
Thus for any $h \in B(h^*,r)$, we have that the corresponding vectors satisfy
$\|u-u^*\| \leq 2 r \sqrt{d}/C_0$.
We can now bound the disagreement coefficient:
\begin{eqnarray*}
\lefteqn{\E_{x \sim D} \sup_{h \in B(h^*,r)} \sup_y |l(h(x),y) - l(h^*(x),y)| }\\
& \leq &
C_1 \, \E_{x \sim D} \sup_{h \in B(h^*,r)} |h(x) - h^*(x)| \\
& \leq &
C_1 \, \E_{x} \sup\{|(u-u^*) \cdot x|: \|u-u^*\| \leq 2r\sqrt{d}/C_0\} \\
& \leq & C_1 \, \cdot 2r \sqrt{d}/C_0 .
\end{eqnarray*}
\end{proof}
\subsection{Upper Bound on Label Complexity}
Finally, we give a bound on label complexity for learning problems
with bounded disagreement coefficient and loss functions with
bounded slope asymmetry. 

\begin{thm}\label{thm:label} 
For all learning problems $D$ and hypothesis spaces $H$, 
if the loss function has slope asymmetry $K_l$, and the learning
problem has disagreement coefficient $\theta$, then for all $\delta > 0$,
with probability at least $1-\delta$ over the choice of data,
the expected number of labels requested by \emph{IWAL(loss-weighting)} during 
the first $T$ iterations is at most
$$4\theta \cdot K_l \cdot 
(L^* T + O( \sqrt{T \ln (|H| T/\delta)} ) ),$$
where $L^*$ is the minimum loss achievable on $D$ by $H$, and the expectation
is over the randomness in the selective sampling.
\end{thm}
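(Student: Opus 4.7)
The plan is to condition on the $1-\delta$ probability event of Theorem~\ref{loss-bound}, on which every $H_t$ contains $h^*$ and every $h\in H_t$ satisfies $L(h)\leq L^*+2\Delta_{t-1}$. Since $\E[Q_t\mid x_t,\text{history}]=p_t$, the expected number of queries over the first $T$ rounds equals $\sum_{t=1}^T\E[p_t]$, so it suffices to bound each $\E[p_t]$ on the good event. The key reduction is a triangle-inequality trick: because $h^*\in H_t$,
$$p_t = \max_{f,g\in H_t}\max_y |l(f(x_t),y)-l(g(x_t),y)| \leq 2\sup_{h\in H_t}\sup_y |l(h(x_t),y)-l(h^*(x_t),y)|,$$
which replaces the diameter of $H_t$ by twice its radius around $h^*$—exactly the quantity the disagreement coefficient is designed to control.

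Next I translate the loss-based constraint on $H_t$ into a $\rho$-distance bound via Lemma~\ref{lemma:two-spaces}: every $h\in H_t$ satisfies $\rho(h,h^*)\leq K_l(L(h)+L^*)\leq 2K_l(L^*+\Delta_{t-1})=:r_t$, so $H_t\subseteq B(h^*,r_t)$ pointwise on the good event. Conditioning on the past history and taking expectation over $x_t$, the definition of the disagreement coefficient applied with radius $r_t$ yields
$$\E[p_t\mid\text{history}] \leq 2\,\E_{x\sim D}\sup_{h\in B(h^*,r_t)}\sup_y |l(h(x),y)-l(h^*(x),y)| \leq 2\theta r_t = 4\theta K_l(L^*+\Delta_{t-1}).$$

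Summing over $t=1,\ldots,T$ gives $\sum_t\E[p_t]\leq 4\theta K_l\bigl(L^*T+\sum_{t=1}^{T}\Delta_{t-1}\bigr)$, and plugging in $\Delta_t=\sqrt{(8/t)\ln(2t(t+1)|H|^2/\delta)}$ together with $\sum_{t=1}^{T}1/\sqrt{t}=O(\sqrt{T})$ yields $\sum_{t=1}^{T}\Delta_{t-1}=O(\sqrt{T\ln(|H|T/\delta)})$, matching the claimed bound. The main conceptual step—and the one I expect to be the chief obstacle—is the triangle-inequality reduction from the diameter $p_t$ to a radius around $h^*$, together with the fact that $H_t\subseteq B(h^*,r_t)$ holds \emph{pathwise} on the good event (so the disagreement-coefficient step is valid after conditioning on the history). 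Once those pieces are in place, Lemma~\ref{lemma:two-spaces} supplies the $K_l$ factor, the definition of $\theta$ supplies the $\theta$ factor, and the rest is routine summation of the standard concentration radii $\Delta_t$.
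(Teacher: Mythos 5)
Your proposal is correct and follows essentially the same route as the paper's proof: the diameter-to-radius reduction via the triangle inequality around $h^*$, the containment $H_t \subseteq B(h^*, 2K_l(L^*+\Delta_{t-1}))$ from Lemma~\ref{lemma:two-spaces}, the application of the disagreement coefficient, and the summation of the $\Delta_{t-1}$. Your explicit remarks about conditioning on the history and the pathwise nature of the inclusion are slightly more careful than the paper's exposition but do not change the argument.
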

\begin{proof}
Suppose $h^*\in H$ achieves loss $L^*$. 
Pick any time $t$. By theorem~\ref{loss-bound}, 
$H_t \subset \{h \in H: L(h) \leq L^* + 2 \Delta_{t-1} \}$ and by 
lemma~\ref{lemma:two-spaces}, $H_t \subset B(h^*, r)$ for 
$r = K_l (2L^* + 2 \Delta_{t-1})$. Thus, the expected value of 
$p_t$ (over the choice of $x$ at time $t$) is at most
\begin{align*}
\E_{x \sim D} \sup_{f,g \in H_t} \sup_y |l(f(x),y) - l(g(x),y)| 
& \leq 2 \, \E_{x \sim D} \sup_{h \in H_t} \sup_y |l(h(x),y) - l(h^*(x),y)| \\
& \leq 2 \, \E_{x \sim D} \sup_{h \in B(h^*,r)} \sup_y |l(h(x),y) - l(h^*(x),y)| \\
& \leq  2 \theta r
= 4 \theta \cdot K_l \cdot \left( L^* + \Delta_{t-1} \right) .
\end{align*}
Summing over $t = 1,\ldots,T$, we get the lemma.
\end{proof}

\subsection{Other examples of low label complexity}
It is also sometimes possible to achieve substantial label
complexity reductions over passive learning, even when the slope
asymmetry is infinite.

\begin{example}
Let the space $X$ be the ball of radius $1$ in $d$ dimensions.

Let the distribution $D$ on $X$ be a point mass at the origin with
weight $1 - \beta$ and label $1$ and a point mass at $(1,0,0,\ldots,0)$
with weight $\beta$ and label $-1$ half the time and label $0$ for the
other half the time.

Let the hypothesis space be linear with weight vectors satisfying
$||w|| \leq 1$.

Let the loss of interest be squared loss: $l(h(x),y) = (h(x)-y)^2$
which has infinite slope asymmetry.
\end{example}

\begin{observation}
For the example above, IWAL({\sl loss-weighting}) requires only an expected
$\beta$ fraction of the labeled samples of passive learning to achieve
the same loss.
\end{observation}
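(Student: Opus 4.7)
The plan is to compute the rejection-threshold probability $p_t$ at each of the two support points of $D$ and show it vanishes at the origin. At $x_t = 0$, every linear classifier $w$ with $\|w\|\leq 1$ satisfies $h(0) = w\cdot 0 = 0$, and the label there is deterministically $1$. Hence $l(h(0),y) = (0-1)^2$ is a constant, independent of $h$ (and of the realized $y$, which is fixed at the origin). The loss-weighting rule
$$ p_t \;=\; \max_{f,g \in H_t}\max_y \bigl(l(f(0),y) - l(g(0),y)\bigr) $$
therefore evaluates to $0$, so IWAL(loss-weighting) never requests a label at the origin.

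At the other support point $x^{*} = (1,0,\ldots,0)$, the predictions $h(x^{*}) = w_1$ genuinely vary as $w$ ranges over $H_t$, so $p_t > 0$ in general and a label is requested with probability at most $1$. Because $x^{*}$ appears in the stream with probability $\beta$, the expected number of label requests after $T$ rounds is at most $\beta T$, a factor-$\beta$ reduction compared to the $T$ labels used by passive learning.

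For the matching loss guarantee, I would invoke theorem~\ref{loss-bound}: the hypothesis returned by IWAL(loss-weighting) satisfies $L(h_T) \leq L^{*} + 2\Delta_{T-1}$ with high probability, and $\Delta_{T-1}$ is the same generalization rate governing the excess risk of passive ERM on $T$ samples. Intuitively, this matches because the origin samples contribute the same additive constant to every hypothesis's empirical loss and therefore do not change the passive ERM either — so passive learning's accuracy is already determined by the $\approx \beta T$ samples at $x^{*}$, which are precisely the samples IWAL queries (in expectation).

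The main point requiring a little care is ensuring consistency with the $[0,1]$ loss normalization demanded in section~\ref{sec:def}: squared loss on this domain must be rescaled by a bounded constant, but this rescaling affects neither the identity $p_t = 0$ at the origin nor the $p_t \leq 1$ bound at $x^{*}$, so the counting argument is unchanged.
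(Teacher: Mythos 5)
Your proposal is correct and follows essentially the same route as the paper's own argument: since every bounded-norm linear classifier predicts $h(0)=0$, all hypotheses incur identical loss at the origin, so $p_t=0$ there and IWAL only queries the point mass at $(1,0,\ldots,0)$, which carries probability $\beta$. Your added details (the explicit evaluation of $p_t$, the appeal to theorem~\ref{loss-bound} for the matching loss, and the normalization remark) merely flesh out steps the paper leaves implicit.
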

\begin{proof}
Passive learning samples from the point mass at the origin a
$(1-\beta)$ fraction of the time, while active learning only samples
from the point mass at $(1,0,0,\ldots,0)$ since all predictors have the
same loss on samples at the origin.

Since all hypothesis $h$ have the same loss for samples at the origin,
only samples not at the origin influence the sample complexity.
Active learning samples from points not at the origin $1/\beta$ more
often than passive learning, implying the theorem.
\end{proof}

\section{A lower bound on label complexity}
\label{sec:lower-bound}

\citep{Lower-Bound} showed that for any
hypothesis class $H$ and any $\eta > \epsilon > 0$, there
is a data distribution such that (a) the 
optimal error rate achievable by $H$ is $\eta$; and (b) any active 
learner that finds $h \in H$ with error rate 
$\leq \eta + \epsilon$ (with probability $> 1/2$) must make
$\eta^2/\epsilon^2$ queries.
We now strengthen this lower bound to $d \eta^2/\epsilon^2$, where $d$ is the 
VC dimension of $H$.

Let's see how this relates to 
the label complexity rates of the previous section. It is
well-known that if a supervised learner sees $T$ examples 
(for any $T > d/\eta$), its final hypothesis has error 
$\leq \eta + \sqrt{d\eta/T}$ \citep{DGL} with high probability. Think of 
this as $\eta + \epsilon$ for 
$\epsilon = \sqrt{d\eta/T}$.  Our lower bound now implies that an active 
learner must make at least $d \eta^2/\epsilon^2 = \eta T$ queries. This
explains the $\eta T$ leading term in all the label complexity bounds
we have discussed.

\begin{thm}
For any $\eta,\epsilon > 0$ such that $2\epsilon \leq \eta \leq 1/4$,
for any input space $X$ and hypothesis class $H$ (of functions
mapping $X$ into $Y = \{+1,-1\}$) of VC dimension $1 < d < \infty$, 
there is a distribution over $X \times Y$ such that (a)
the best error rate achievable by $H$ is $\eta$; (b) any active
learner seeking a classifier of error at most $\eta + \epsilon$
must make $\Omega(d\eta^2/\epsilon^2)$ queries to succeed
with probability at least $1/2$.
\label{thm:lower-bound}
\end{thm}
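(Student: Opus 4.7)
The plan is a standard multiple-hypothesis reduction (Assouad's lemma) on a shattered set, parameterised so that each query carries only $O(\gamma^2)$ bits of information with $\gamma \asymp \epsilon/\eta$. First I would fix a shattered set $\{x_1,\ldots,x_d\} \subset X$ and define a family of $2^{d-1}$ distributions indexed by $\sigma \in \{\pm 1\}^{d-1}$. The marginal on $X$ puts mass $1-\beta$ at the ``easy'' point $x_1$ (labeled deterministically $+1$) and mass $\beta/(d-1)$ at each $x_i$ for $i\geq 2$; under $D_\sigma$, the label at $x_i$ ($i\geq 2$) satisfies $\P[y = \sigma_{i-1} \mid x_i] = 1/2 + \gamma$. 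Shattering guarantees some $h_\sigma \in H$ agreeing with $+1$ at $x_1$ and with $\sigma_{i-1}$ at each $x_i$; this $h_\sigma$ is the Bayes classifier, with loss $\beta(1/2-\gamma)$. Tuning $\beta = 2\eta/(1-2\gamma)$ makes the Bayes loss exactly $\eta$, and tuning $\gamma = c\,\epsilon/\eta$ for a small constant $c$ forces any hypothesis disagreeing with $h_\sigma$ on $k$ of the noisy points to incur excess loss $k \cdot 2\beta\gamma/(d-1) = \Theta(k\epsilon/d)$. The assumptions $2\epsilon \leq \eta \leq 1/4$ keep $\beta$ and $\tfrac12 \pm \gamma$ safely inside $(0,1)$.

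Second, I would reduce the learning problem to hypercube parameter estimation: any active learner that with probability at least $1/2$ outputs a hypothesis of loss $\leq \eta+\epsilon$ under each $D_\sigma$ yields an estimator $\hat\sigma \in \{\pm 1\}^{d-1}$ satisfying $\sum_{i=1}^{d-1} \P_\sigma[\hat\sigma_i \neq \sigma_i] = O(d)$ under the uniform prior on $\sigma$. Assouad's lemma then gives
\[
\sum_{i=1}^{d-1} \P[\hat\sigma_i \neq \sigma_i] \;\geq\; \frac{d-1}{2}\Bigl(1 \,-\, \max_{i,\sigma} \|Q^T_\sigma - Q^T_{\sigma \oplus e_i}\|_{\mathrm{TV}}\Bigr),
\]
where $Q^T_\sigma$ is the joint law of the learner's transcript (adaptive query locations, coins $Q_t$, and observed labels) during its first $T$ queries under $D_\sigma$, and $e_i$ is the $i$th unit vector in the hypercube. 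Pinsker together with the chain rule for adaptive transcripts bounds $D_{\mathrm{KL}}(Q^T_\sigma\|Q^T_{\sigma \oplus e_i}) \leq \E_\sigma[N_{i+1}^T]\cdot D_{\mathrm{KL}}(\mathrm{Ber}(\tfrac12+\gamma)\,\|\,\mathrm{Ber}(\tfrac12-\gamma)) = O(\gamma^2 \E_\sigma[N_{i+1}^T])$, where $N_j^T$ counts queries to $x_j$ (the two laws agree at every other point). Summing over $i$ and using $\sum_j \E_\sigma[N_j^T] \leq T$ forces $T\gamma^2 = \Omega(d)$, i.e., $T = \Omega(d/\gamma^2) = \Omega(d\eta^2/\epsilon^2)$.

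The main obstacle is making the adaptive chain-rule step truly rigorous: $N_j^T$ is itself random and its law depends on $\sigma$, so one has to unroll the transcript step by step, observing that conditional on the past the learner's next query location is a deterministic function of the transcript, whereupon the incremental KL vanishes on all steps except those where $x_{i+1}$ is queried, on which it is at most $D_{\mathrm{KL}}(\mathrm{Ber}(\tfrac12+\gamma)\,\|\,\mathrm{Ber}(\tfrac12-\gamma)) = O(\gamma^2)$. This is the standard interactive Le Cam / Assouad machinery, but easy to get wrong; everything else (the parameter algebra above and the minor bookkeeping that replaces $d$ by $d-1$ for small $d$) is routine.
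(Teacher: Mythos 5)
Your construction is essentially identical to the paper's: a shattered set with one heavy, deterministically labeled point and $d-1$ light points whose labels encode random bits with bias $\gamma=\Theta(\epsilon/\eta)$, so that achieving excess loss $\epsilon$ requires recovering a constant fraction of the bits while each query to a light point reveals only $O(\gamma^2)$ nats. The difference is in the endgame, and that is where there is a genuine gap. Assouad's lemma lower-bounds the \emph{expected} Hamming distance by $\frac{d-1}{2}\bigl(1-\delta\bigr)$ with $\delta\ge 0$, so it can never certify an expected Hamming distance above $(d-1)/2$. But a learner required to succeed only with probability $1/2$ is permitted an expected Hamming distance as large as $\frac{1}{2}\cdot\frac{d-1}{4}+\frac{1}{2}\cdot(d-1)=\frac{5(d-1)}{8}>\frac{d-1}{2}$, since on the failure event (probability up to $1/2$) it may get every bit wrong. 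Hence the bound $\sum_i \P[\hat\sigma_i\neq\sigma_i]=O(d)$ you extract from the success guarantee cannot contradict Assouad's bound for any value of the total-variation term; the reduction as written does not close. (A secondary, fixable issue: you state Assouad with $\max_{i,\sigma}$ of the TV distance but then sum over $i$ and invoke $\sum_j \E[N_j^T]\le T$; you need the per-coordinate form together with Cauchy--Schwarz, since the learner could concentrate all $T$ queries on a single point and drive one TV term to $1$.)

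The paper sidesteps this by arguing per coordinate in probability rather than in expectation: each lightly queried bit is guessed correctly with probability less than $2/3$ (via Slud's binomial-to-normal tail comparison), and an anti-concentration step then shows that at least $(d-1)/4$ bits are wrong with probability at least $1/2$. If you want to keep the information-theoretic flavor of your argument, replace Assouad by Fano over a packing: take a subset of the hypercube of size $2^{\Omega(d)}$ with pairwise Hamming distance $\Omega(d)$; success with probability $1/2$ forces the transcript to carry $\Omega(d)$ nats of mutual information about the codeword, while your (correct) adaptive chain-rule bound caps this at $O(T\gamma^2)$, yielding $T=\Omega(d/\gamma^2)=\Omega(d\eta^2/\epsilon^2)$ cleanly.
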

\begin{proof}
Pick a set of $d$ points $x_o, x_1, x_2, \ldots, x_{d-1}$ shattered by $H$. 
Here is a distribution over $X \times Y$: point $x_o$ has probability 
$1-\beta$, while each of the remaining $x_i$ has 
probability $\beta/(d-1)$, where $\beta = 2(\eta + 2\epsilon)$.
At $x_o$, the response is always $y=1$. At $x_i, i \geq 1$, the response 
is $y=1$ with probability $1/2 + \gamma b_i$, where $b_i$ is either $+1$ 
or $-1$, and 
$\gamma = 2\epsilon/\beta = \epsilon/(\eta + 2\epsilon) < 1/4$.

Nature starts by picking $b_1, \ldots, b_{d-1}$ uniformly at random. This
defines the target hypothesis $h^*$: $h^*(x_o) = 1$ and $h^*(x_i) = b_i$. 
Its error rate is $\beta \cdot (1/2 - \gamma) = \eta$.

Any learner outputs a hypothesis in $H$ and thus implicitly makes guesses
at the underlying hidden bits $b_i$. Unless it correctly determines $b_i$ 
for at least $3/4$ of the points $x_1, \ldots, x_{d-1}$, the error of its 
hypothesis will be at least 
$\eta + (1/4) \cdot \beta \cdot (2 \gamma) = \eta + \epsilon$.

Now, suppose the active learner makes $\leq c (d-1)/\gamma^2$ queries, 
where $c$ is a small constant ($c \leq 1/125$ suffices). We'll show 
that it fails (outputs a hypothesis with error $\geq \eta + \epsilon$) 
with probability at least $1/2$.

We'll say $x_i$ is {\it heavily queried} if the active learner queries
it at least $4c/\gamma^2$ times. At most $1/4$ of the $x_i$'s are heavily queried;
without loss of generality, these are $x_1, \ldots, x_k$, for some 
$k \leq (d-1)/4$. The remaining $x_i$ get so few queries that the learner 
guesses each corresponding bit $b_i$ with probability less than $2/3$; this 
can be derived from Slud's lemma (below), which 
relates the tails of a binomial to that of a normal.

Let $F_i$ denote the event that the learner gets $b_i$ wrong; 
so $\E F_i \geq 1/3$ for $i > k$. Since $k \leq (d-1)/4$,
the probability that the learner fails is given by
\begin{align*}
\P [\mbox{learner fails}]
& =  
\P [F_1 + \cdots + F_{d-1} \geq (d-1)/4] \\
& \geq 
\P [F_{k+1} + \cdots + F_{d-1} \geq (d-1)/4] \\
& \geq 
\P [B \geq (d-1)/4] \geq 
\P [Z \geq 0] = 1/2,
\end{align*}
where $B$ is a $\mbox{binomial}((3/4)(d-1),1/3)$ random variable,
$Z$ is a standard normal, and the last inequality follows from Slud's lemma. 
Thus the active learner must make at least
$c (d-1)/\gamma^2 = \Omega(d \eta^2/\epsilon^2)$ 
queries to succeed with probability at least $1/2$.
\end{proof}

\begin{lemma}[\citet{S77}]
Let $B$ be a Binomial $(n,p)$ random variable with $p \leq 1/2$, and let
$Z$ be a standard normal. For any $k \in [np, n(1-p)]$,
$ \P [B \geq k] \ \geq \ \P [ Z \geq (k-np)/\sqrt{np(1-p)}] .$
\label{lemma:slud}
\end{lemma}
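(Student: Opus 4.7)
The plan is to establish Slud's inequality via a refined local central limit theorem for the binomial: show that the binomial PMF in the upper tail $j \geq np$ is pointwise at least as large as the corresponding normal density (up to a small continuity correction), and then sum. Write $\sigma = \sqrt{np(1-p)}$, $\pi_j = \binom{n}{j} p^j (1-p)^{n-j}$, and $\phi_j = \sigma^{-1}\phi((j-np)/\sigma)$ for the standard normal density $\phi$. The target inequality reduces to
\[ \sum_{j=k}^{n} \pi_j \ \geq \ \int_{k}^{\infty} \phi_x \, dx. \]

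The first step is to expand $\pi_j$ using Stirling's formula with explicit remainder bounds: one obtains $\pi_j = \phi_j \cdot (1 + R_n(j,p))$ for an error $R_n(j,p)$ that can be written as a convergent series in $(j-np)/\sigma$ and $1/\sqrt{n}$. The crucial observation is that for $p \leq 1/2$ and $j \geq np$, the leading-order term in $R_n(j,p)$ is nonnegative, reflecting the skewness of the binomial toward its right tail whenever $p \leq 1/2$. This yields a pointwise lower bound $\pi_j \geq \phi_j$ (modulo a small midpoint-rule correction) throughout the regime of interest.

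The second step aggregates these pointwise bounds. Using the midpoint rule, $\int_{j-1/2}^{j+1/2} \phi_x\, dx$ differs from $\phi_j$ by a term of order $\phi_j''/\sigma^2$ whose sign can be absorbed into the slack from Step 1 (since $\phi$ is concave near its mean and convex in the tail, but we are summing over a half-line). Summing $\pi_j$ for $j \geq k$ then bounds $\int_{k-1/2}^\infty \phi_x\, dx$ from below, and the residual $\int_{k-1/2}^k \phi_x\, dx$ piece is handled by a half-step shift in the Stirling expansion, giving the clean form of the inequality.

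The main obstacle is controlling the sign of $R_n(j,p)$ uniformly over $j \in [np, n(1-p)]$ and $p \leq 1/2$, not merely its magnitude. Standard Stirling bounds give $|R_n| = O(1/\sqrt{n})$ without a sign; extracting a sign requires an Edgeworth-type expansion that exploits the specific structure of the binomial third cumulant $np(1-p)(1-2p)$, which is nonnegative precisely when $p \leq 1/2$. If this direct route proves too delicate at the boundary $j \approx np$ where the pointwise inequality is tightest, my fallback is Slud's original characteristic-function approach: compare the Fourier transforms of $B$ and $\sigma Z + np$ and use a contour-integral tail comparison that automatically encodes the asymmetry $p \leq 1/2$ through the sign of the cubic term in the cumulant generating function.
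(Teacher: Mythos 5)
There is no internal proof to compare against here: the paper quotes this inequality directly from \citet{S77} and never proves it, so your attempt must stand entirely on its own. Judged that way, it has a genuine gap at its central step. Your architecture is (1) pointwise domination $\pi_j \geq \phi_j$ for all $j \geq np$ when $p \leq 1/2$, then (2) summation. Step (1) is false. The leading correction in the local expansion $\pi_j = \phi_j(1 + R_n(j,p))$ is the Edgeworth skewness term $\frac{\kappa_3}{6\sigma^3}(x^3 - 3x)$ with $x = (j-np)/\sigma$ and $\kappa_3 = np(1-p)(1-2p) \geq 0$; the Hermite factor $x^3 - 3x$ is \emph{negative} on $0 < x < \sqrt{3}$. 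A right-skewed unimodal law has its mode below its mean, so the binomial puts \emph{less} mass than the matching normal density at points just above $np$ --- exactly the region $np \leq j \leq np + \sqrt{3}\,\sigma$ where the tail inequality is tightest and where theorem~\ref{thm:lower-bound} actually invokes the lemma. (Even at $p = 1/2$, where $\kappa_3 = 0$, the $O(1/n)$ Stirling correction makes $\pi_j$ fall \emph{below} $\phi_j$ at the center.) So the difficulty you flag as ``delicate at the boundary'' is not a matter of sharpening constants: the sign claim underpinning Step 1 is simply wrong there, and the whole pointwise-then-sum mechanism collapses.

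The inequality near the mean is instead driven by the half-integer continuity correction (the sum over $j \geq k$ effectively covers $[k-\tfrac12,\infty)$, gaining roughly $\tfrac12\phi_k$ over $\int_k^\infty$), combined with cancellation across $j$; note that even the \emph{integrated} leading Edgeworth correction to $\P[Z \geq x]$, namely $\frac{\kappa_3}{6\sigma^3}(x^2-1)\phi(x)$, has the wrong sign for $0 \leq x < 1$, so no argument that works term-by-term from the skewness sign can succeed on that range. Making all of this rigorous uniformly in $n$, $p \leq 1/2$, and $k \in [np, n(1-p)]$ --- with explicit remainders rather than asymptotics --- is essentially the entire content of Slud's paper, whose actual proof proceeds through monotonicity lemmas and a comparison with the Poisson law rather than either a pointwise local-CLT domination or the contour-integral tail comparison you describe as the ``original'' fallback. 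As written, the proposal is a plan whose key claim is incorrect and whose remaining steps are not carried out, so it does not constitute a proof.
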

Theorem~\ref{thm:lower-bound} uses the same example that is used for lower 
bounds on supervised sample complexity (section 14.4 of \citep{DGL}), 
although in that case the lower bound is $d \eta/\epsilon^2$.  The bound 
for active learning is smaller by a factor of $\eta$ because the active 
learner can avoid making repeated queries to the ``heavy'' point $x_o$, 
whose label is immediately obvious.

\section{Implementing IWAL}
\label{sec:easy-alg}
IWAL({loss-weighting}) can be efficiently implemented in the
case where $H$ is the class of bounded-length linear separators 
$\{u \in \R^d: \|u\|^2 \leq B\}$ and the loss function is convex: 
$l(z,y) = \phi(yz)$ for convex $\phi$. 

Each iteration of Algorithm~\ref{alg:threshold} involves solving 
two optimization problems over a restricted hypothesis set
$$ H_t \ = \ \bigcap_{t' < t} \left\{h\in H: \textstyle{ \frac{1}{t'}\sum_{i=1}^{t'}\frac{Q_i}{p_i}l(h(x_{i}),y_{i})\leq L_{t'}^{*}+\Delta_{t'}} \right\}.$$
Replacing each $h$ by its corresponding vector $u$, this is
$$ H_t \ = \ \bigcap_{t' < t} \left\{u \in \R^d: \|u\|^2 \leq B \mbox{\ and\ } \frac{1}{t'}\sum_{i=1}^{t'}\frac{Q_i}{p_i} \phi(u \cdot (y_i x_i)) \leq L_{t'}^{*}+\Delta_{t'} \right\}.$$
an intersection of 
convex constraints.

The first optimization in Algorithm~\ref{alg:threshold} is 
$L_T^* = \min_{u \in H_T} \ \sum_{i=1}^T \frac{Q_i}{p_i} \phi(u \cdot (y_ix_i))$,
a convex program.

The second optimization is
$\max_{u,v \in H_T} \phi(y(u \cdot x)) - \phi(y(v \cdot x)), 
\ y  \in  \{+1,-1\}$
(where $u,v$ correspond to functions $f,g$).
If $\phi$ is nonincreasing
(as it is for $0$--$1$, hinge, or logistic loss), then the solution of this
problem is $\max\{\phi(A(x)) - \phi(-A(-x)), \phi(A(-x)) - \phi(-A(x))\}$, where 
$A(x)$ is the solution of a convex program:
$
A(x) \equiv \min_{u \in H_T} \ u \cdot x .
$
The two cases inside the max correspond to the choices $y = 1$ and $y = -1$.

Thus Algorithm~\ref{alg:threshold} can be efficiently implemented for nonincreasing 
convex loss functions and bounded-length linear separators.
In our experiments, we use a simpler implementation. For the first problem 
(determining $L_T^*$), we minimize over $H$ rather than $H_T$; for the second 
(determining $A(x)$), instead of defining $H_T$ by $T-1$ convex constraints, 
we simply enforce the last of these constraints (corresponding to time $T-1$).
This may lead to an overly conservative choice of $p_t$, but by 
theorem~\ref{thm:safety}, the consistency of $h_T$ is assured.
%

\subsection{Experiments}\label{sec6:experiments}
Recent consistent active learning algorithms \citep{A^2,DHM} have
suffered from computational intractability.  This section
shows that importance weighted active learning is practical.

We implemented IWAL with loss-weighting for linear separators under
logistic loss.  As outlined above, the algorithm involves two convex
optimizations as subroutines.  These were coded using log-barrier
methods (section 11.2 of \citep{BV04}).
We tried out the algorithm on the MNIST data set of handwritten
digits by picking out the 3's and 5's as two classes, and choosing 1000
exemplars of each for training and another 1000 of each for
testing. We used PCA to reduce the dimension from 784 to 25. 
The algorithm uses a generalization bound $\Delta_t$ of the form 
$\sqrt{d/t}$; since this is believed to often be loose in high dimensions,
we also tried a more optimistic bound of $1/\sqrt{t}$. In either case,
active learning achieved very similar performance (in terms of test
error or test logistic loss) to a supervised learner that saw all the
labels. The active learner asked for less than $1/3$ of the labels.

\begin{figure}[h]
\includegraphics[width=.5\textwidth]{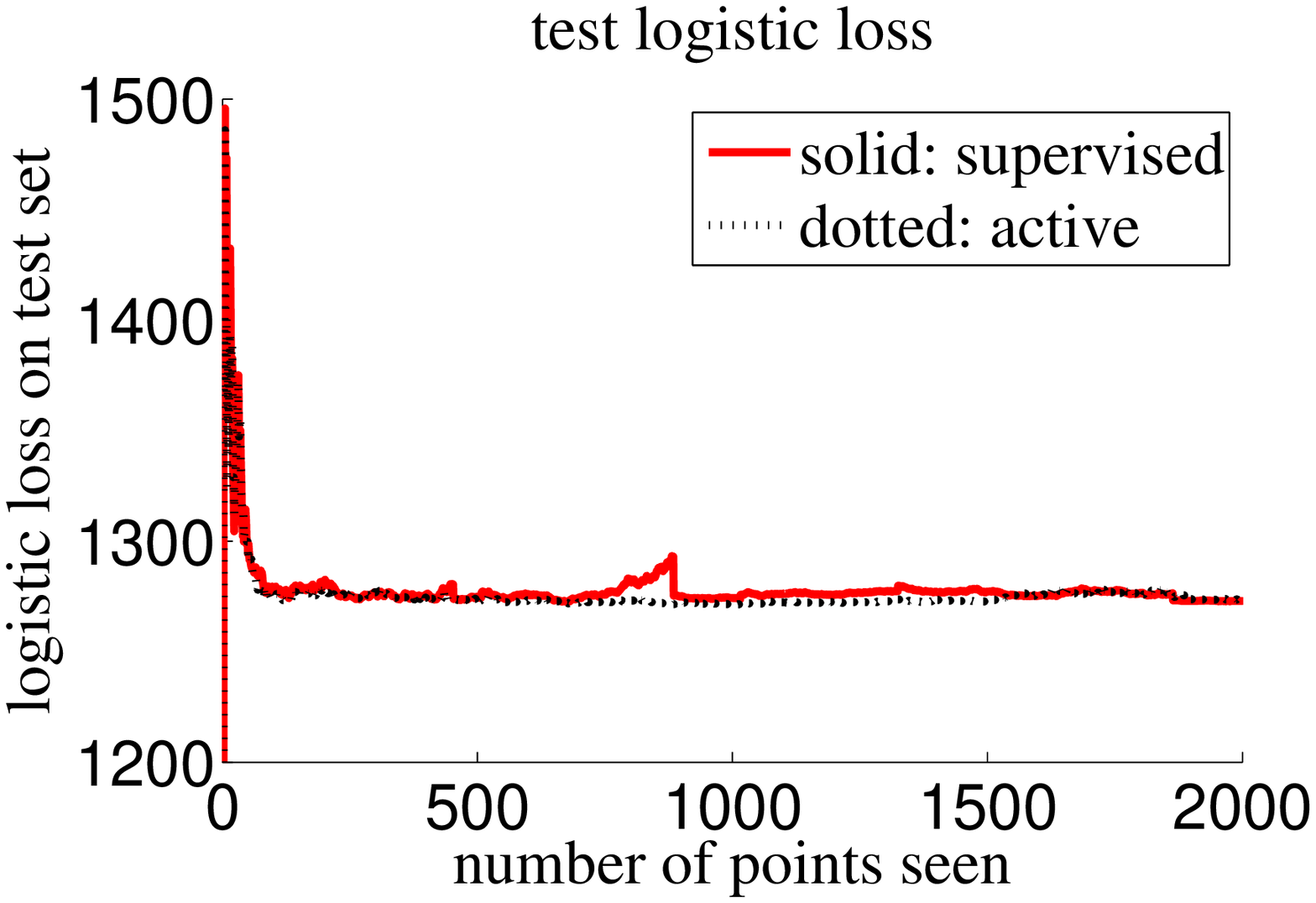}
\includegraphics[width=.5\textwidth]{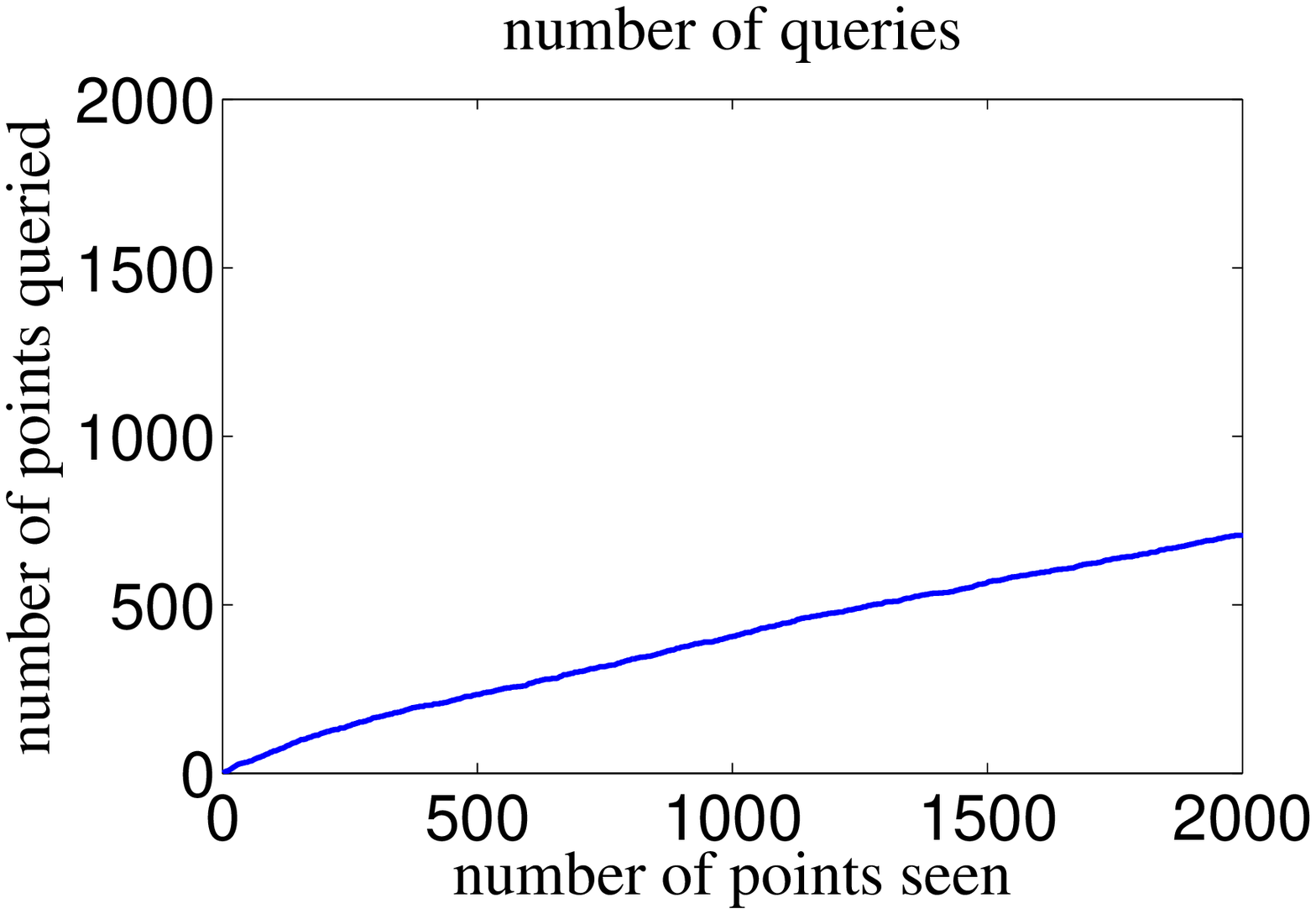}
\caption{
Top: Test logistic loss as number of points seen grows
from $0$ to $2000$ (solid: supervised; dotted: active learning).
Bottom: \#queries vs \#points seen.}
\label{fig:expts}
\end{figure}

\subsection{Bootstrap instantiation of IWAL}
\label{sec:bootstrap}
This section reports another practical implementation of IWAL, 
using a simple bootstrapping scheme
to compute the rejection threshold.  
A set $H$ of predictors is trained on some initial set of 
labeled examples and serves as an approximation of the version
space. 
Given a new unlabeled example $x$, the sampling probability is
set to $p_{\min} + (1 - p_{\min}) \big[\max_{y; h_i,h_j\in H} 
L(h_i(x),y) - L(h_j(x),y)\big]$, where 
$p_{\min}$ is a lower bound on 
the sampling probability.

We implemented this scheme for binary and multiclass classification loss,
using 10 decision trees bootstrapped on the initial 1/10th of the training set,
setting $p_{\min}=0.1$. 
For simplicity, we did't retrain the predictors for each new queried
point, i.e., the predictors were trained once on the initial sample.
The final predictor is trained on the collected
importance-weighted training set, and tested on the test set.
The Costing technique~\citep{costing} was used 
to remove the importance weights using rejection sampling.
(The same technique can be applied to any loss
function.)  The resulting unweighted classification problem was then solved
using a decision tree learner (J48).
On the same MNIST dataset as
in section~\ref{sec6:experiments}, the scheme
performed just as well as passive learning, using only
65.6\% of the labels (see Figure~\ref{fig:2}).

\begin{figure}[h]
\includegraphics[width=.28\textwidth,angle=-90]{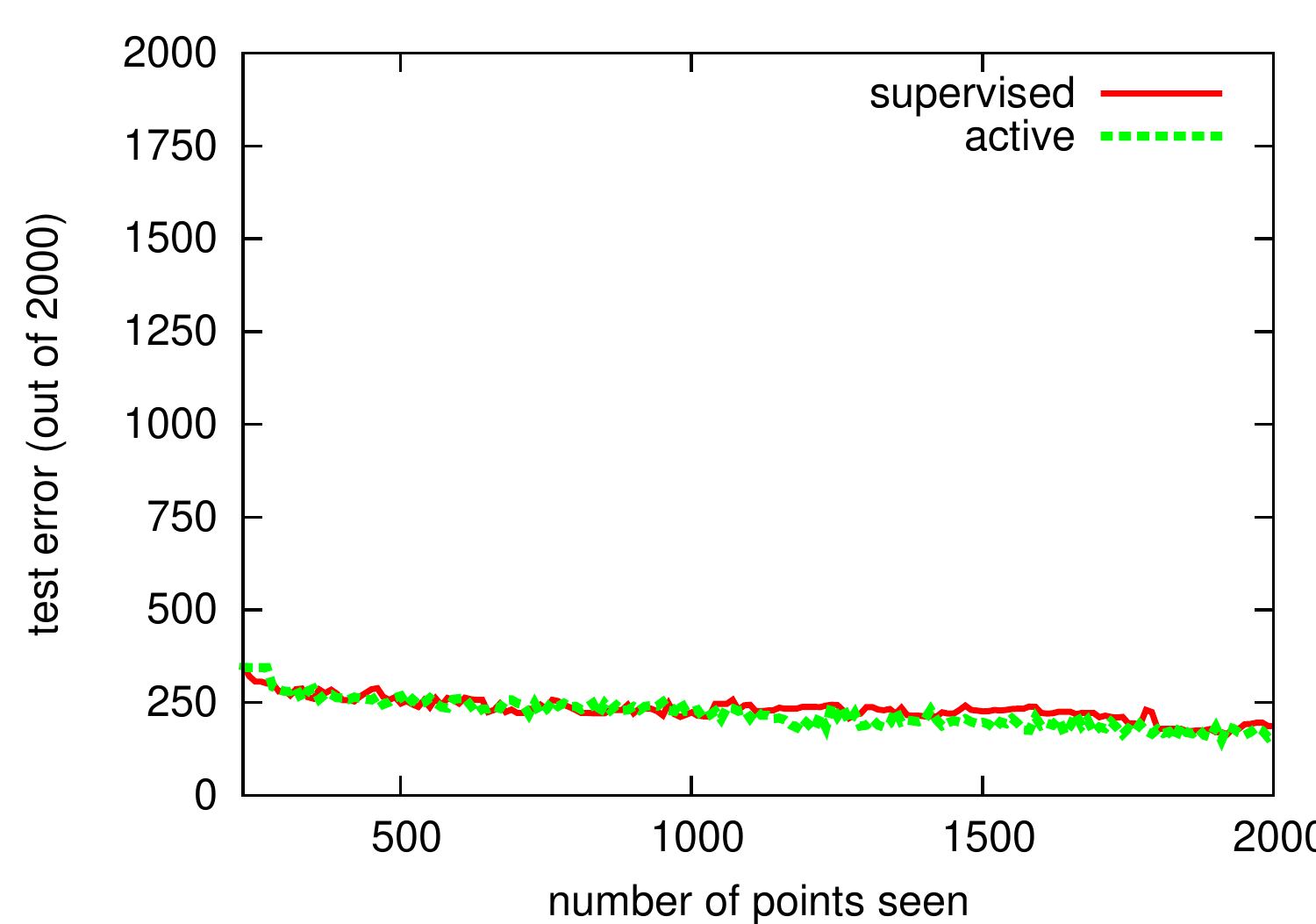}
\includegraphics[width=.28\textwidth,angle=-90]{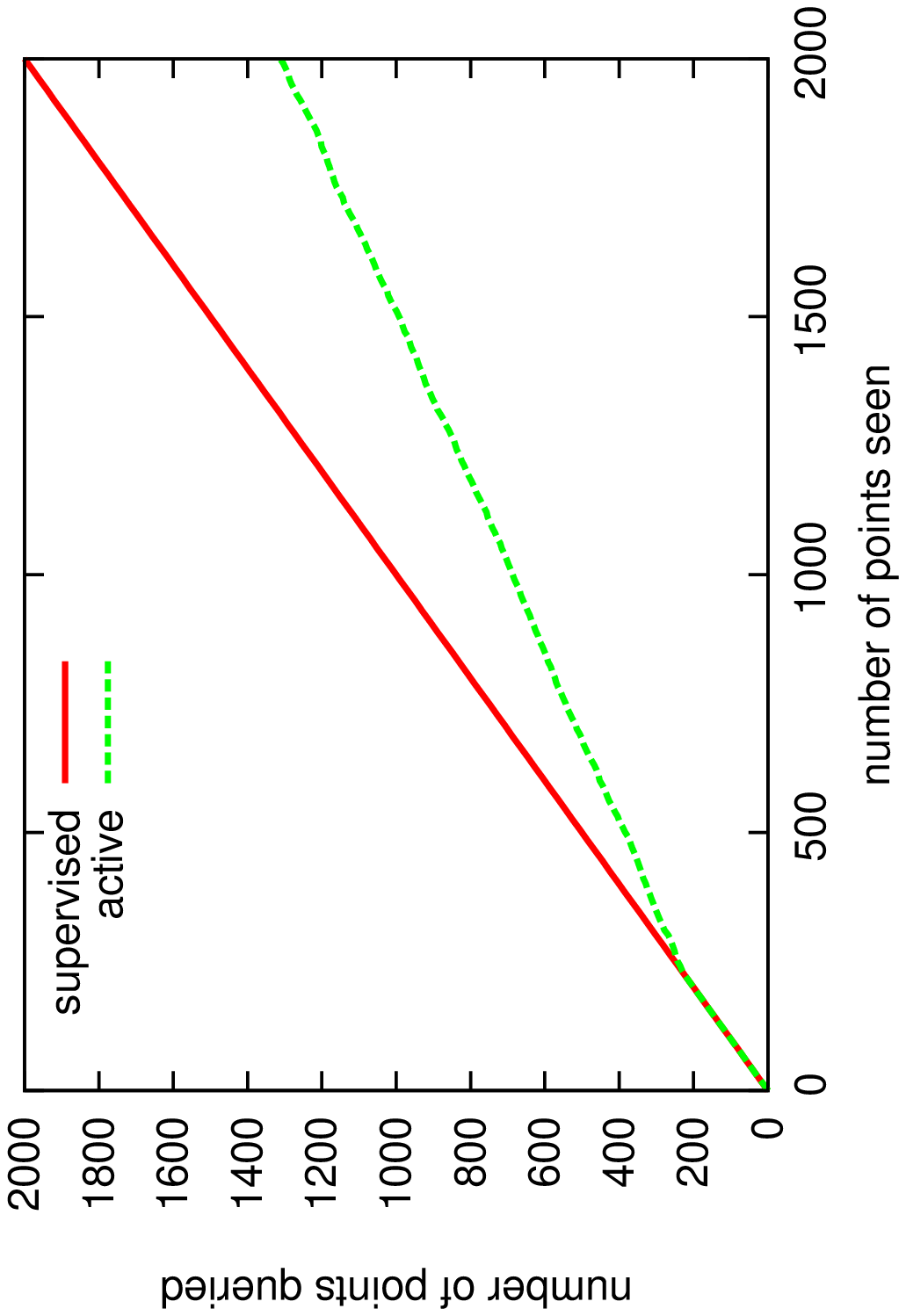}
\caption{
Top: Test error as number of points seen grows
from $200$ (the size of the initial batch, where active learning queries 
every label) to $2000$ (solid: supervised; dotted: active learning).
Bottom: \#queries vs \#points seen.}
\label{fig:2}
\end{figure}

The following table
reports additional experiments performed 
on standard benchmark datasets, 
bootstrapped on the initial 10\%. 
\begin{center}
\begin{tabular}{lcccc}
Data set & IWAL & Passive & Queried & Train/test\\
& error rate & error rate & & split \\
\hline
adult & 14.1\% & 14.5\% & 40\% & 4000/2000\\
letter & 13.8\% & 13.0\% & 75.0\% & 14000/6000\\
pima & 23.3\% & 26.4\% &  67.6\% & 538/230\\
spambase & 9.0\% & 8.9\% & 44.2\% & 3221/1380\\
yeast & 28.8\% & 28.6\% & 82.2\% & 1000/500\\
\hline
\end{tabular}
\end{center}
\ignore{
{\bf adult} (binary, roughly 4000 training and
2000 test examples, queried 40.0\%), 
{\bf pima} (binary, 538 training and
230 test examples, queried 67.6\%),
{\bf yeast} (binary, roughly 1000 training and
500 test examples, queried 82.2\%),
{\bf spambase} (binary, 3221 training and 1380 test examples, 
queried 44.2\%; bootstrapping on the initial 5\% increases query
 complexity
to 53.5\%), {\bf letter} (25 classes, 14000 training and
6000 test examples, queried 75.0\%).
For all these problems, active learning
resulted in no loss of predictive accuracy.
}

\section{Conclusion}

The IWAL algorithms and analysis presented here remove many reasonable
objections to the deployment of active learning.  IWAL satisfies the
same convergence guarantee as common supervised learning algorithms,
it can take advantage of standard algorithms (section \ref{sec:bootstrap}), 
it can deal with very
flexible losses, and in theory and practice it can yield substantial
label complexity improvements.

Empirically, in \emph{every} experiment we have tried, IWAL has
substantially reduced the label complexity compared to supervised
learning, with no sacrifice in performance on the same number of
unlabeled examples.  Since IWAL explicitly accounts for sample
selection bias, we can be sure that these experiments are valid for
use in constructing new datasets.  This implies another subtle
advantage: because the sampling bias is known, it is
possible to hypothesize and check the performance of IWAL algorithms
on datasets drawn by IWAL.  This potential for self-tuning off-policy
evaluation is extremely useful when labels are expensive.

\section{Acknowledgements}

We would like to thank Alex Strehl for a very careful reading which
caught a couple proof bugs.

\ignore{
\begin{figure}
\includegraphics[width=0.35\textwidth,angle=-90]{error}
\includegraphics[width=0.35\textwidth,angle=-90]{number-of-queries}
\caption{Experiments with {\sl bootstrap}. Left: Test error, as the number of unlabeled points seen grows from 200 (the size of the initial batch, where active learning queries every label) to 2000. Right: Number of queries as a function of the number of points seen.}
\label{fig:bootstrapping}
\end{figure}

\begin{figure}
\rotatebox{-90}{\includegraphics[width=.35\textwidth]{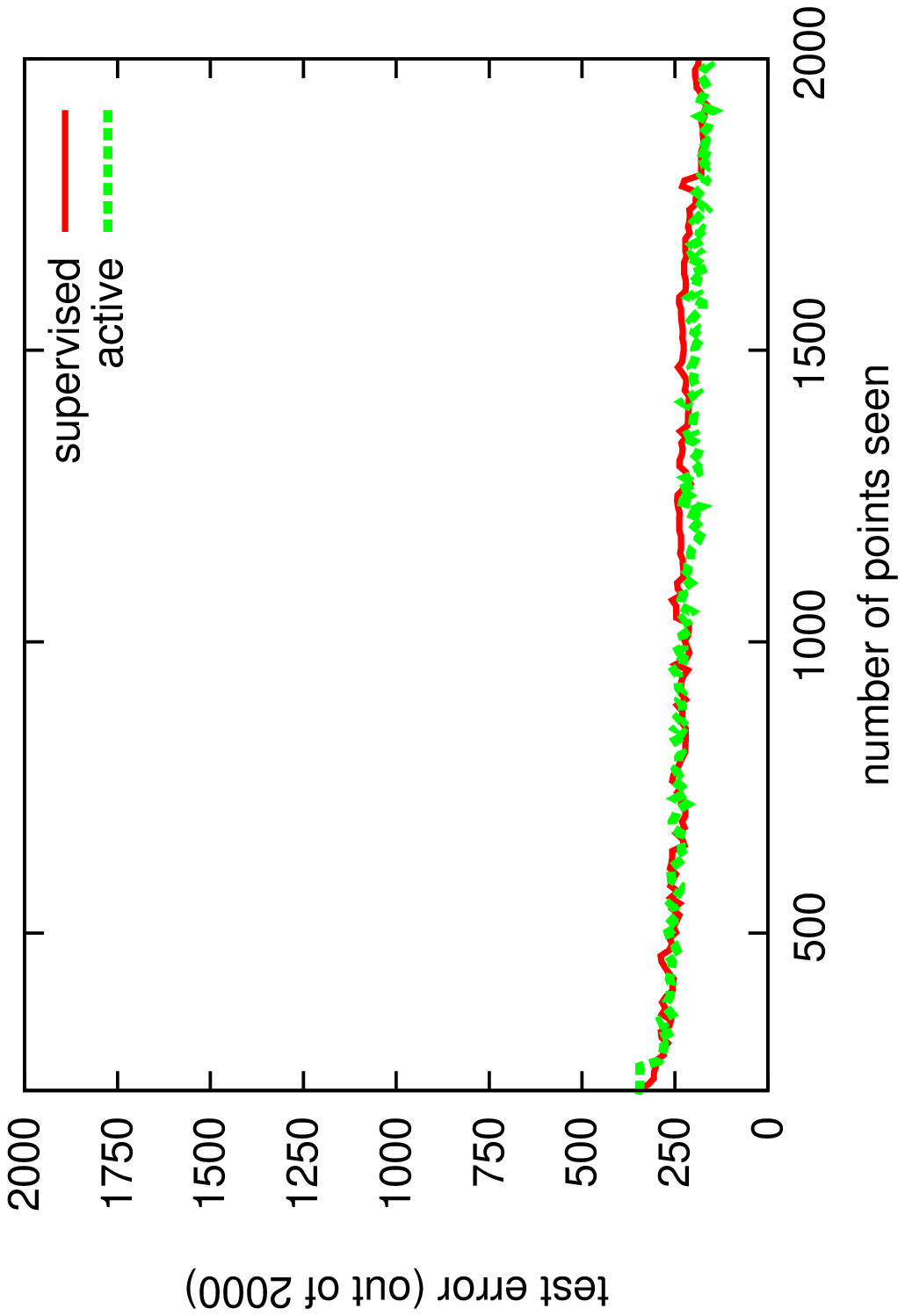}}
\rotatebox{-90}{\includegraphics[width=.35\textwidth]{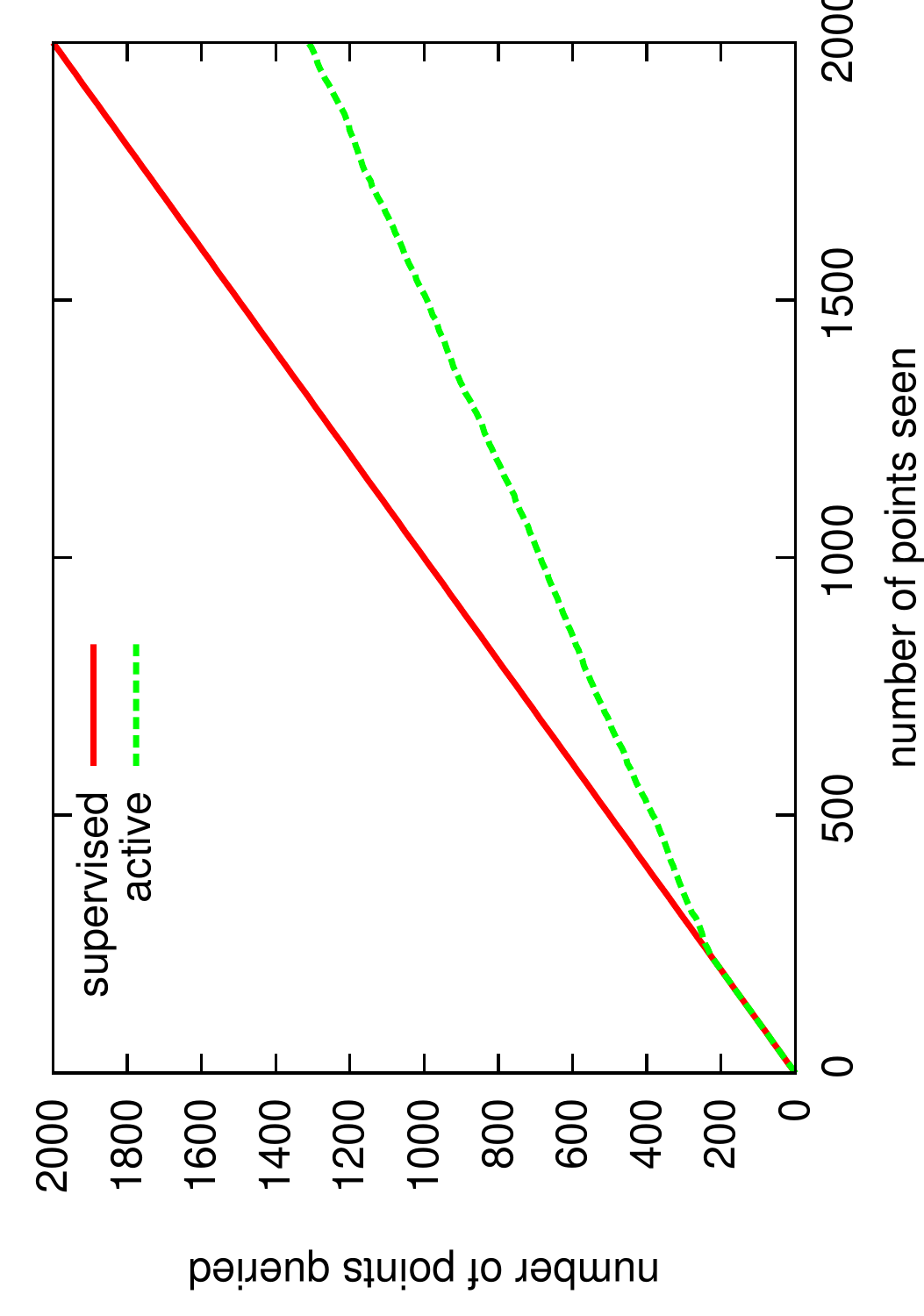}}
\caption{A subset of the {\bf mnist} 
dataset (binary, 3 versus 5): 2000 training and 
2000 test examples, bootstrapped on the initial 10\%. Queried 65.6\%.}
\end{figure}

\begin{figure}
\rotatebox{-90}{\includegraphics[width=.35\textwidth]{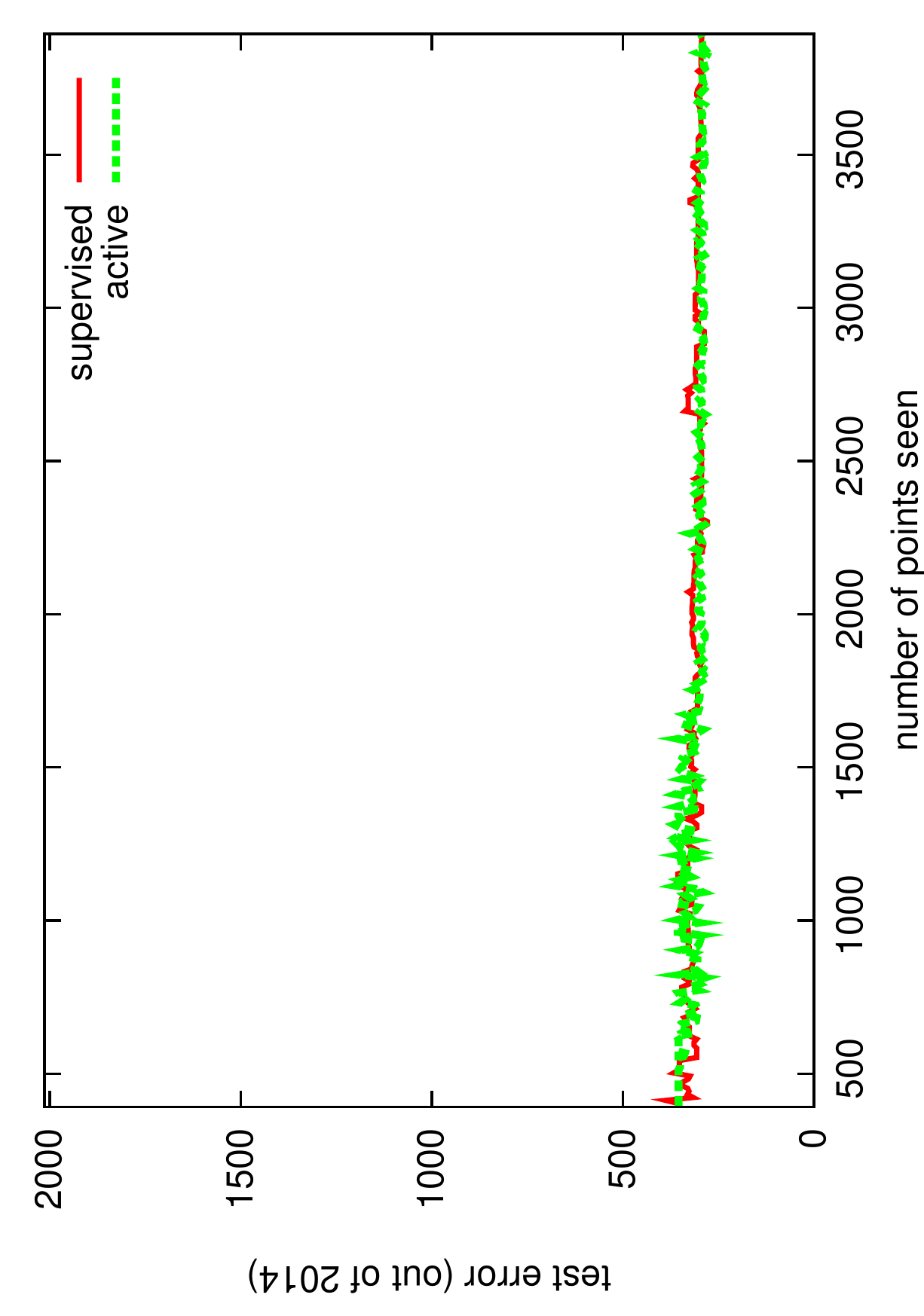}}
\rotatebox{-90}{\includegraphics[width=.35\textwidth]{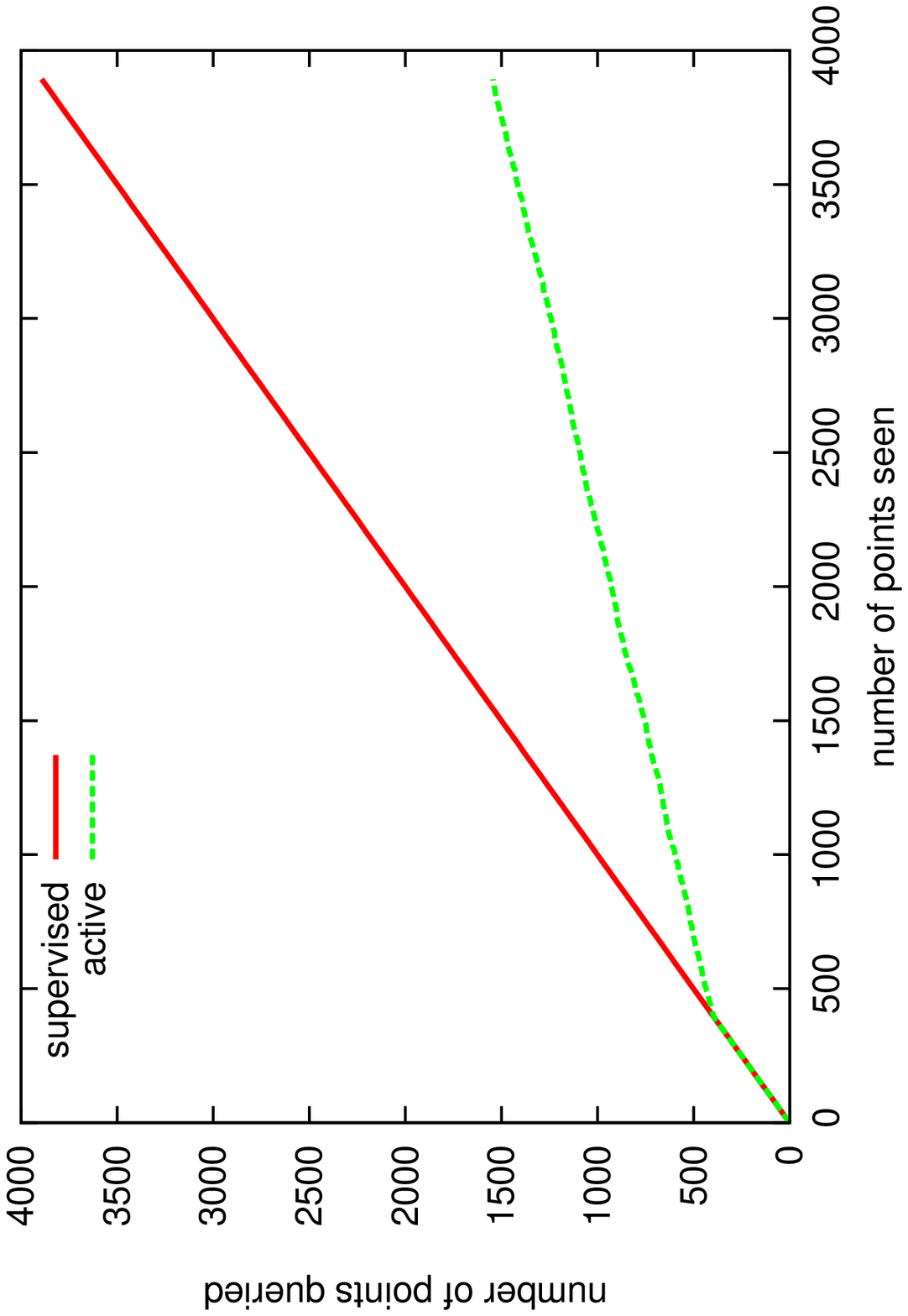}}
\caption{{\bf adult} dataset (binary): roughly 4000 training and
2000 test examples, bootstrapped on the initial 10\%. Queried 40.0\%.}
\end{figure}

\begin{figure}
\rotatebox{-90}{\includegraphics[width=.35\textwidth]{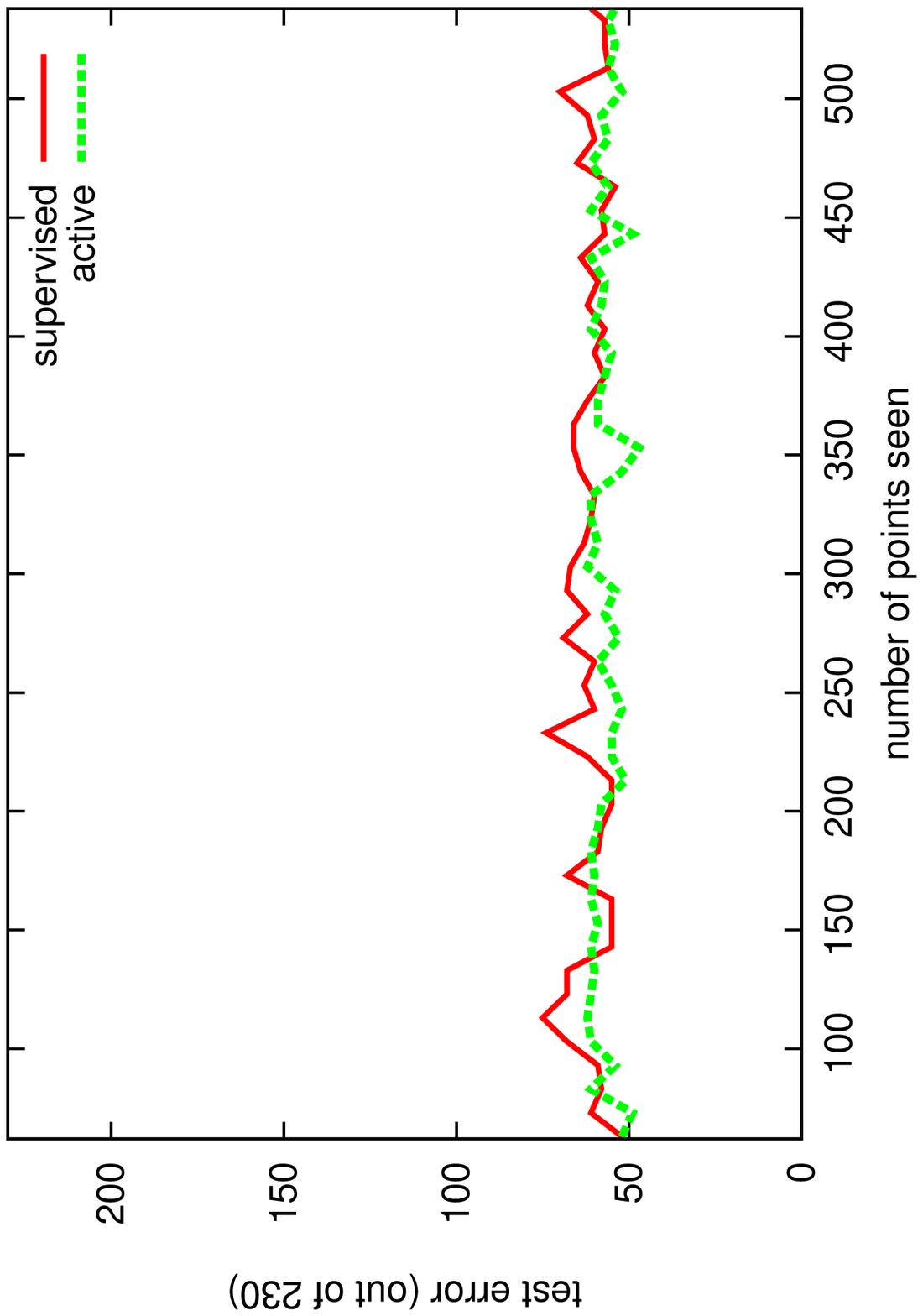}}
\rotatebox{-90}{\includegraphics[width=.35\textwidth]{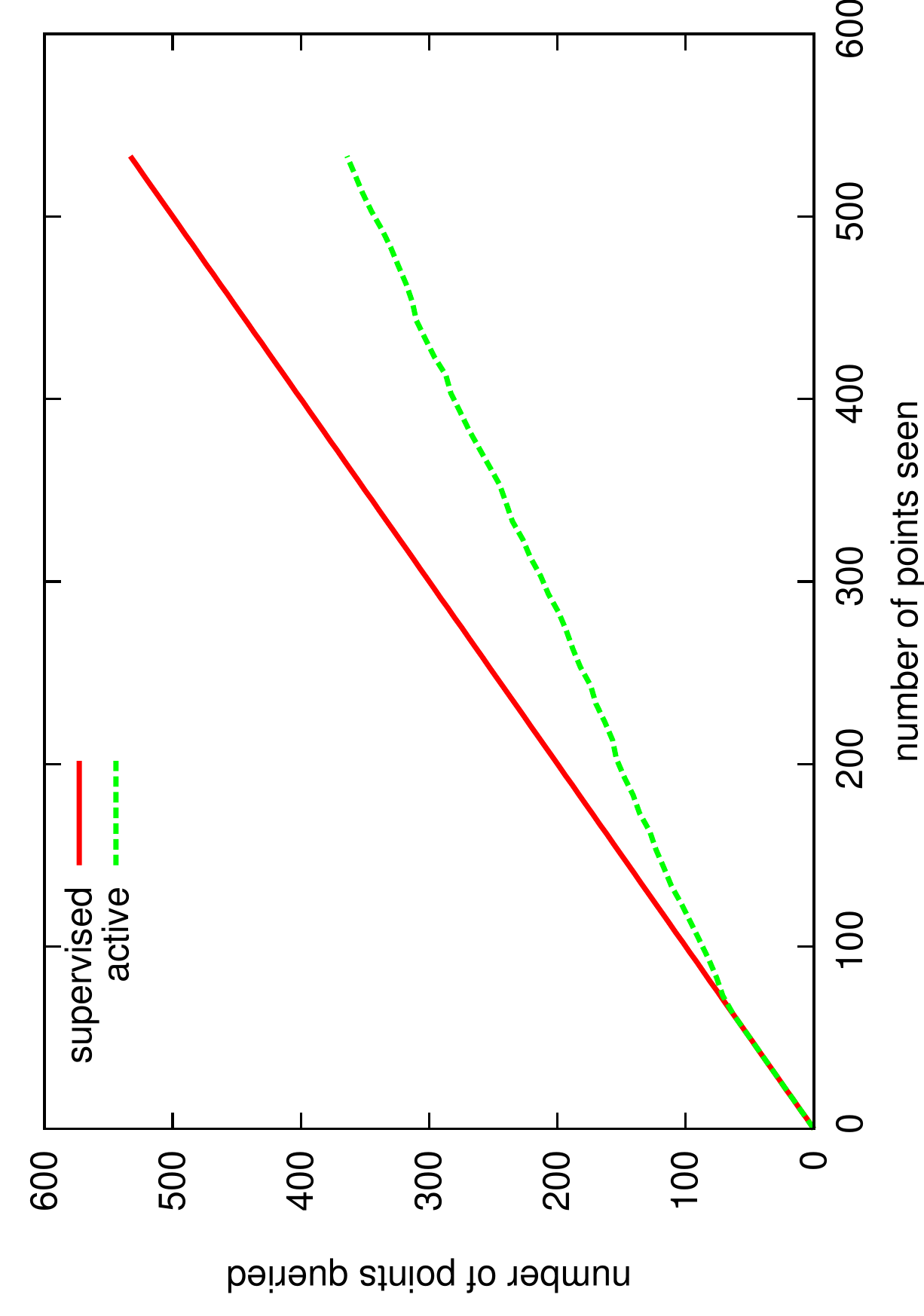}}
\caption{{\bf pima} dataset (binary): 538 training and
230 test examples, bootstrapped on the initial 10\%. Queried 67.6\%.}
\end{figure}

\begin{figure}[h]
\rotatebox{-90}{\includegraphics[width=.35\textwidth]{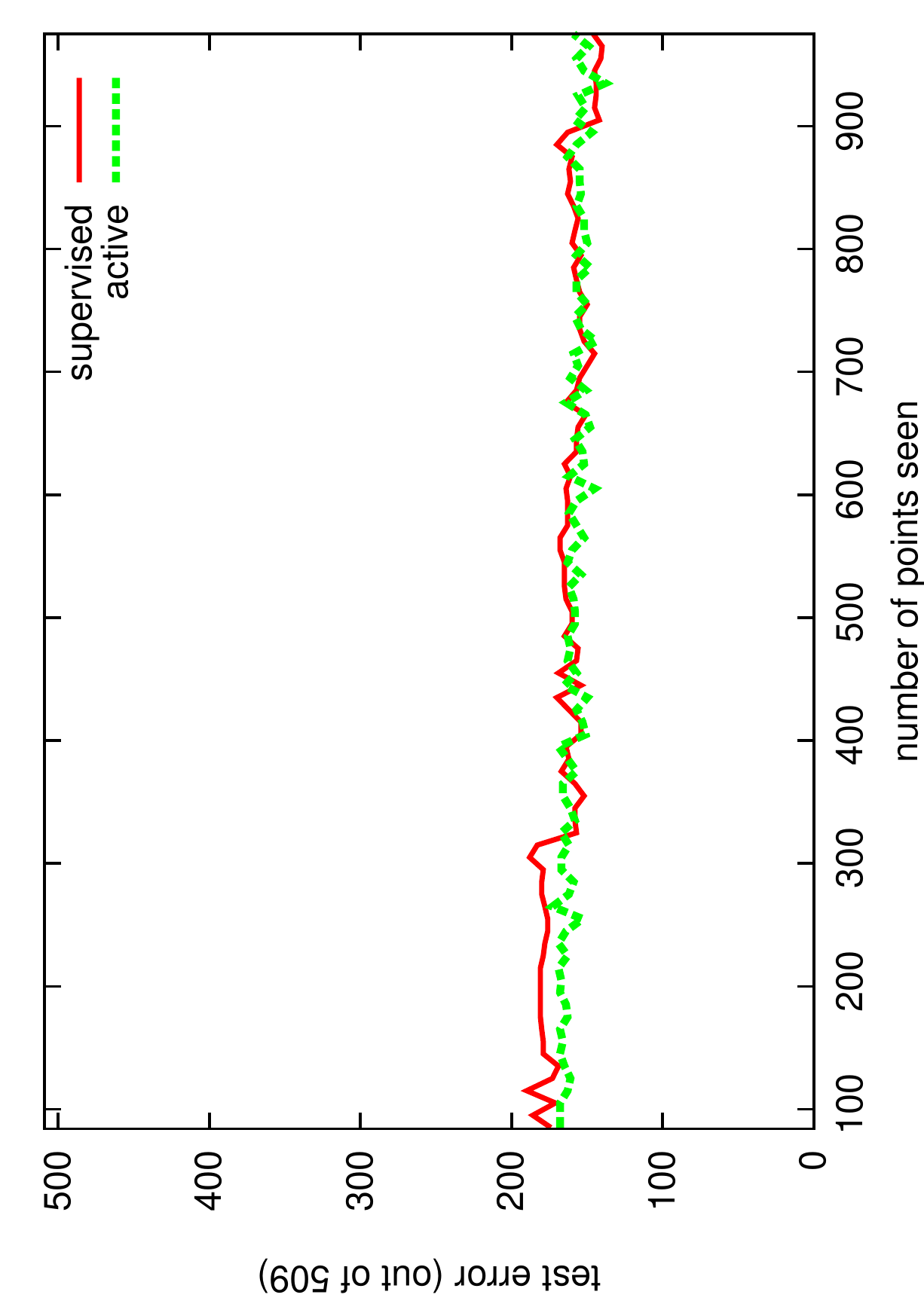}}
\rotatebox{-90}{\includegraphics[width=.35\textwidth]{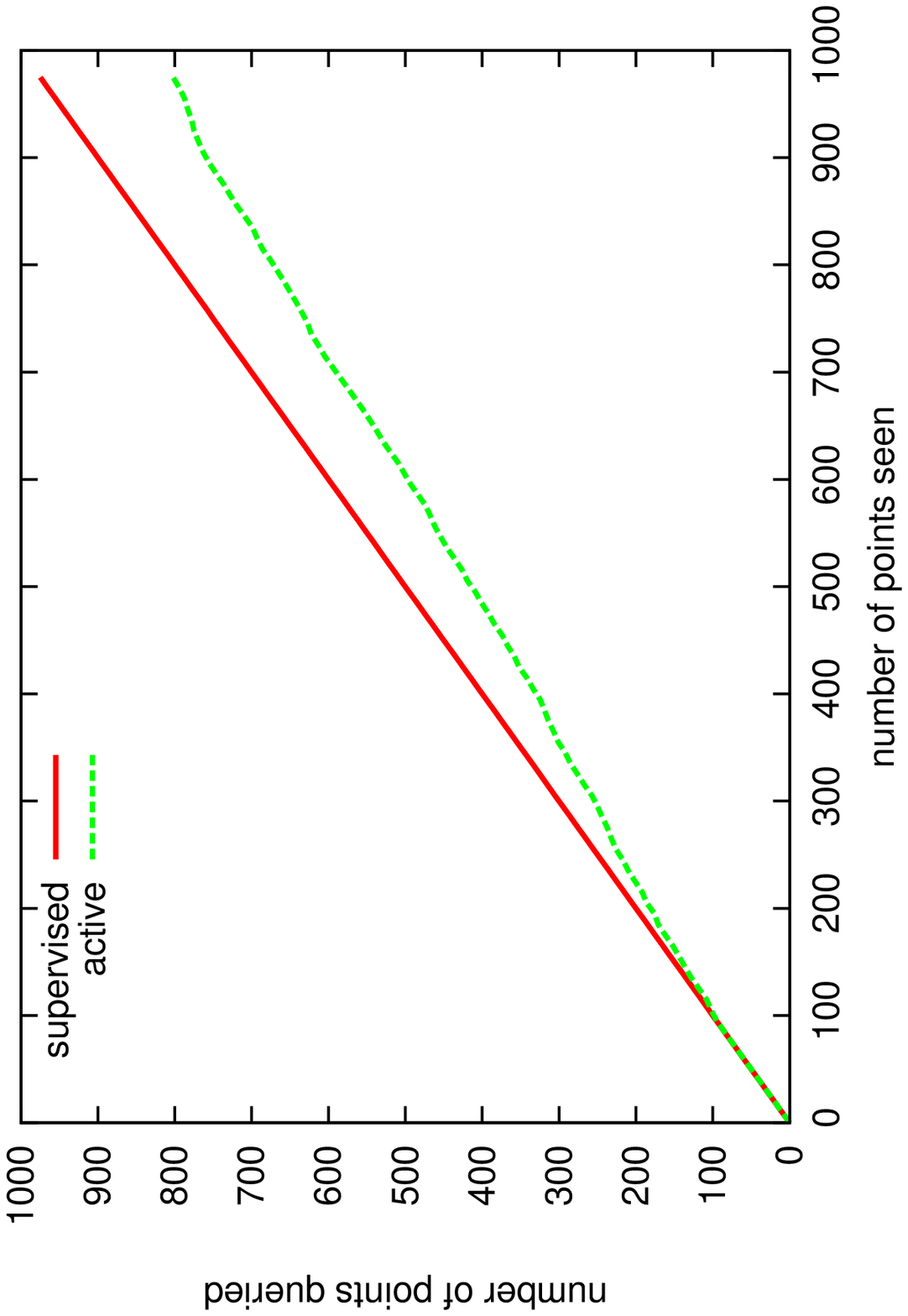}}

Figure 4: {{\bf yeast} 
dataset (binary): roughly 1000 training and
500 test examples, bootstrapped on the initial 10\%. Queried 82.2\%.}
\end{figure}

\begin{figure}[h]
\rotatebox{-90}{\includegraphics[width=.35\textwidth]{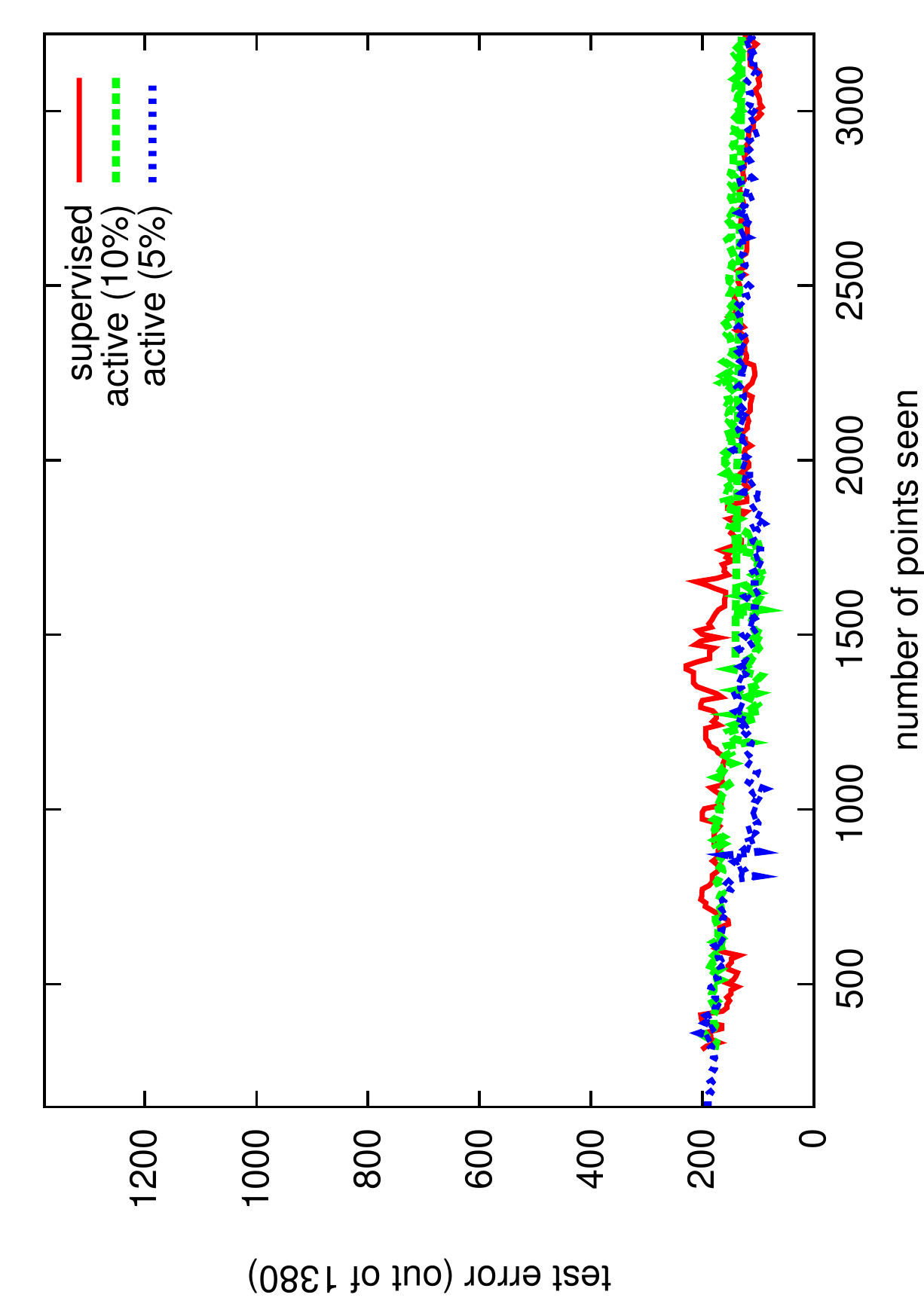}}
\rotatebox{-90}{\includegraphics[width=.35\textwidth]{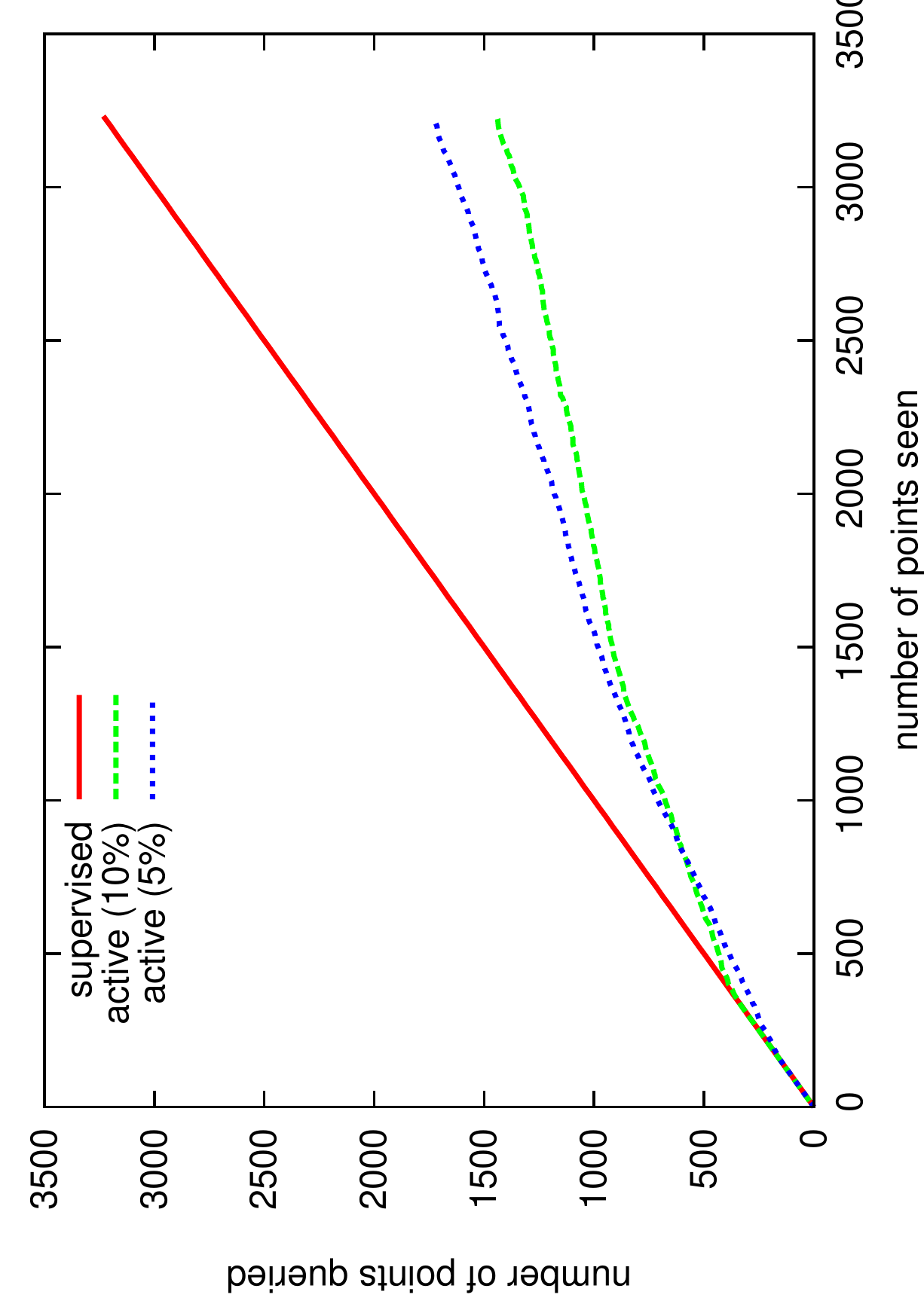}}

Figure 5: {{\bf spambase}
dataset (binary): 3221 training and 1380 test examples, bootstrapped on the initial 10\% and 5\%. Queried 44.2\% and 53.5\% respectively.}
\end{figure}

\begin{figure}[h]
\rotatebox{-90}{\includegraphics[width=.35\textwidth]{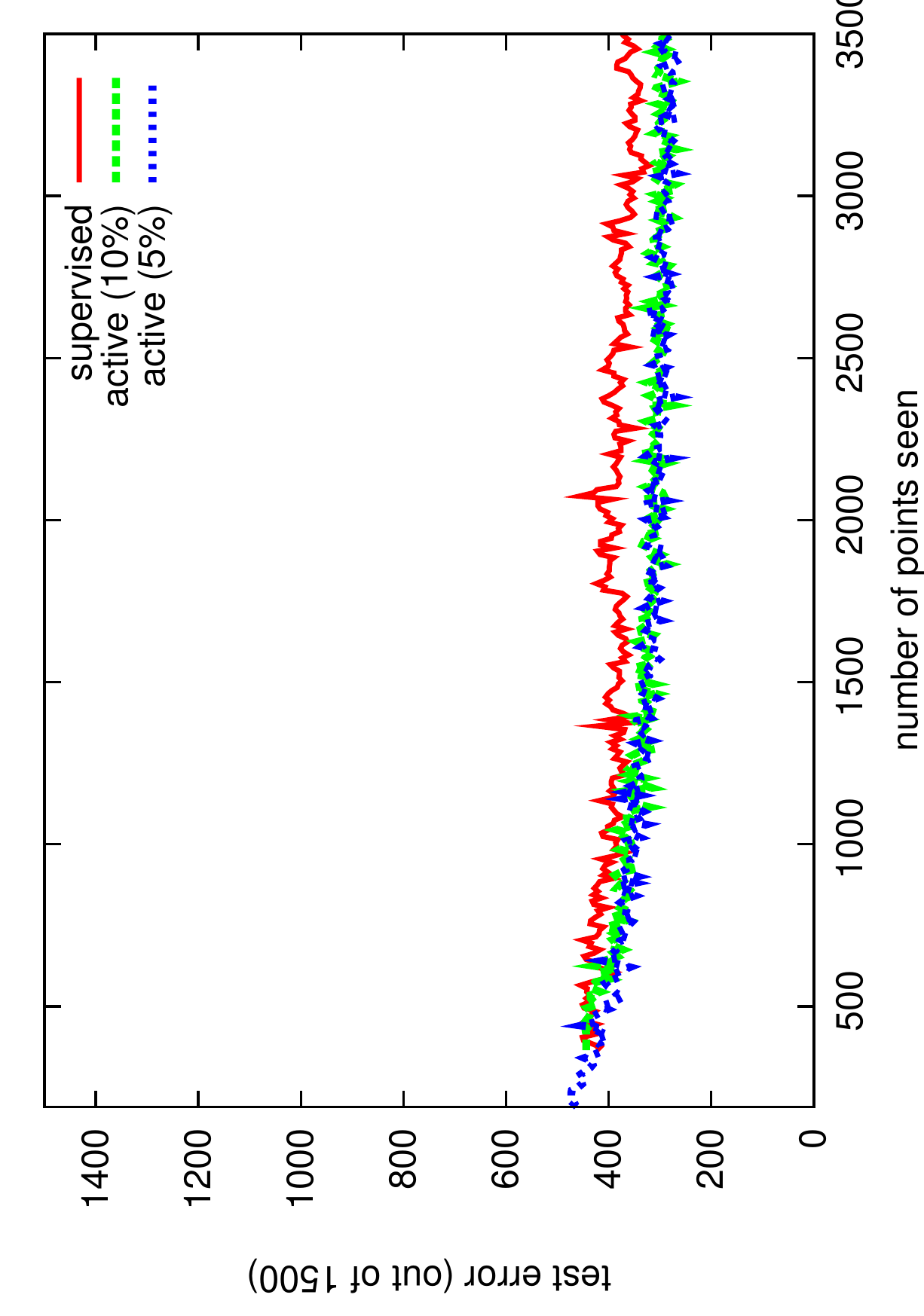}}
\rotatebox{-90}{\includegraphics[width=.35\textwidth]{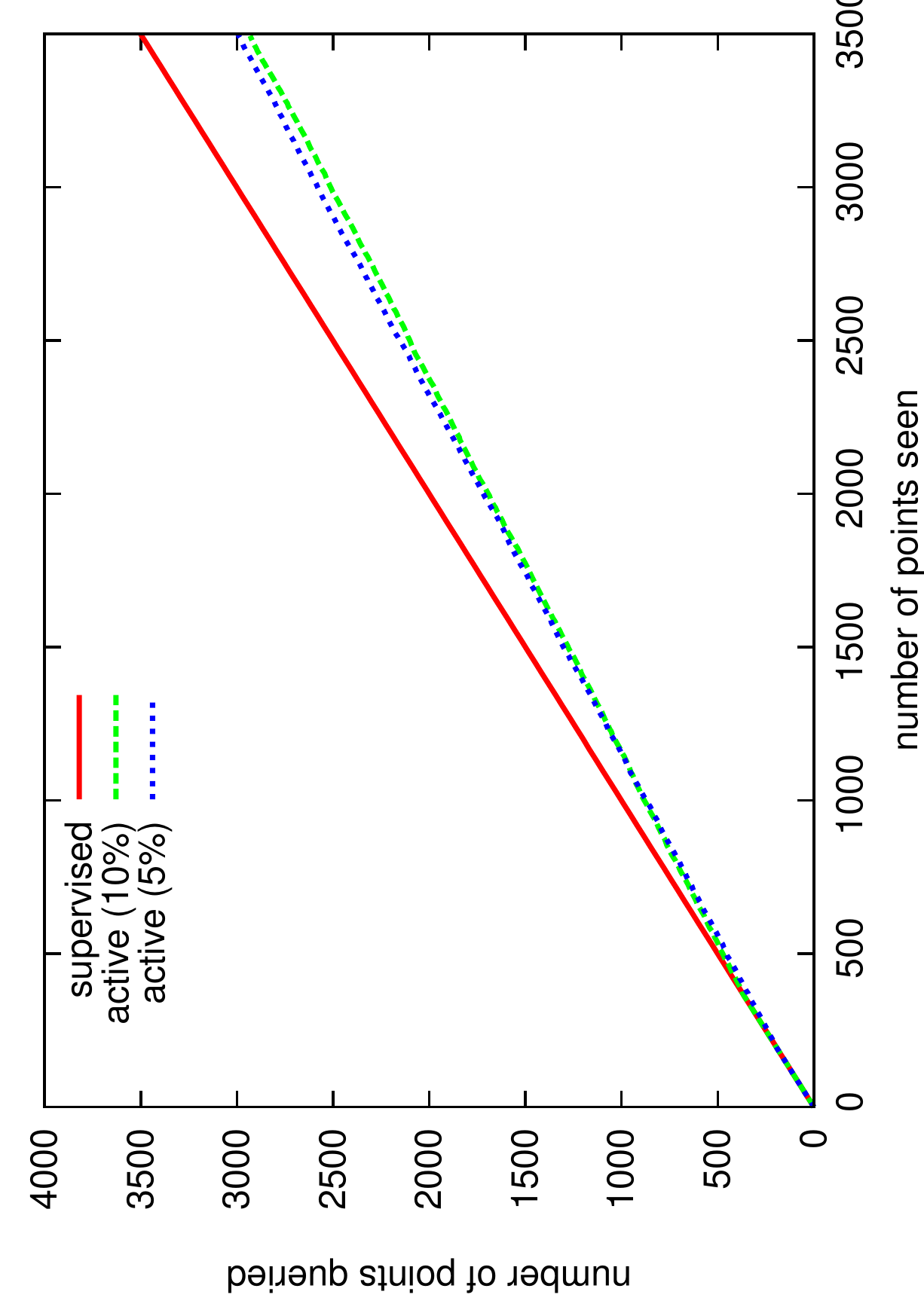}}

Figure 6: {{\bf waveform}
dataset (3 classes): 3500 training and 1500 test examples, bootstrapped on the initial 10\% and 5\%. Queried 83.7\% and 85.6\% respectively.}
\end{figure}

\begin{figure}[h]
\rotatebox{-90}{\includegraphics[width=.35\textwidth]{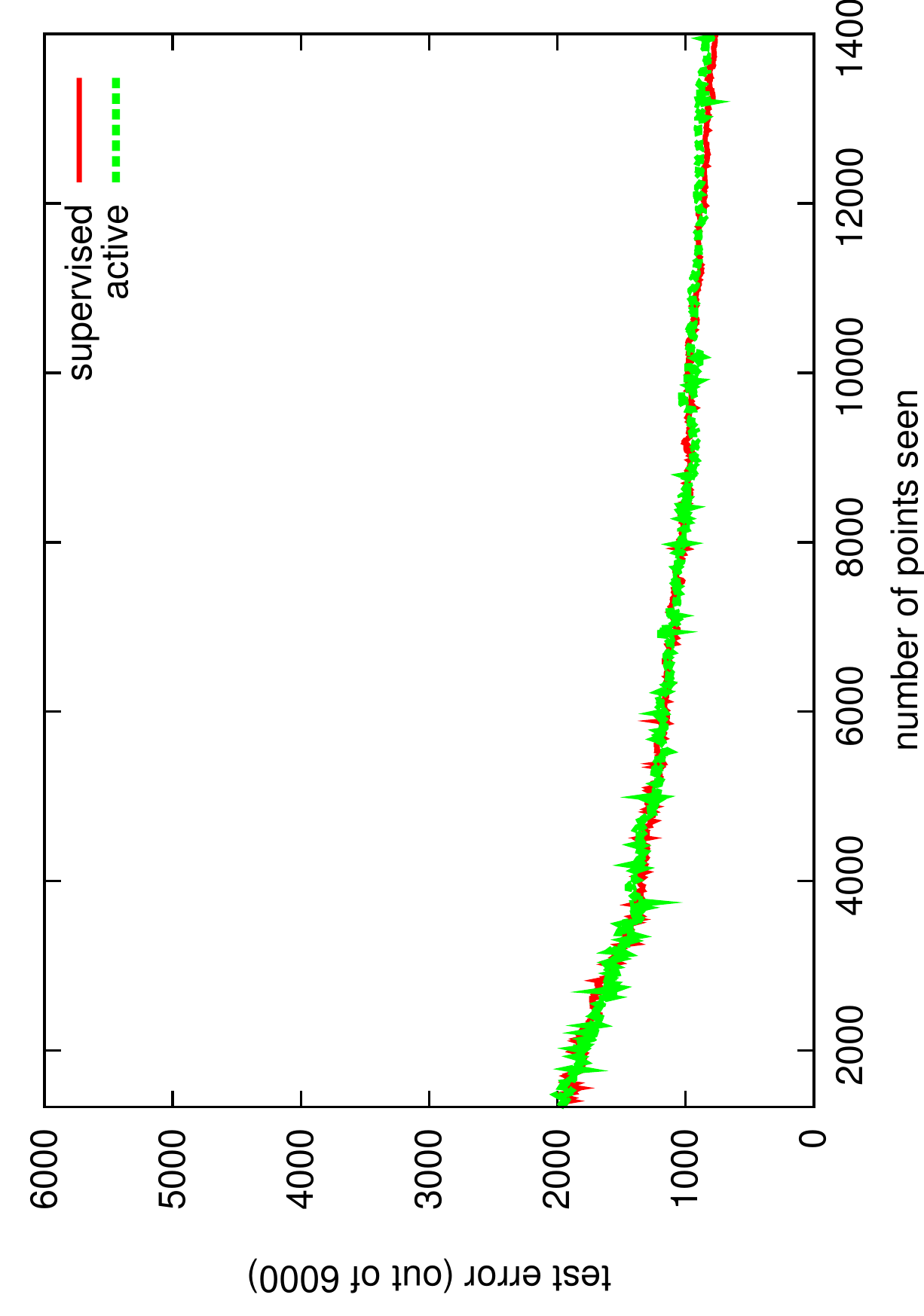}}
\rotatebox{-90}{\includegraphics[width=.35\textwidth]{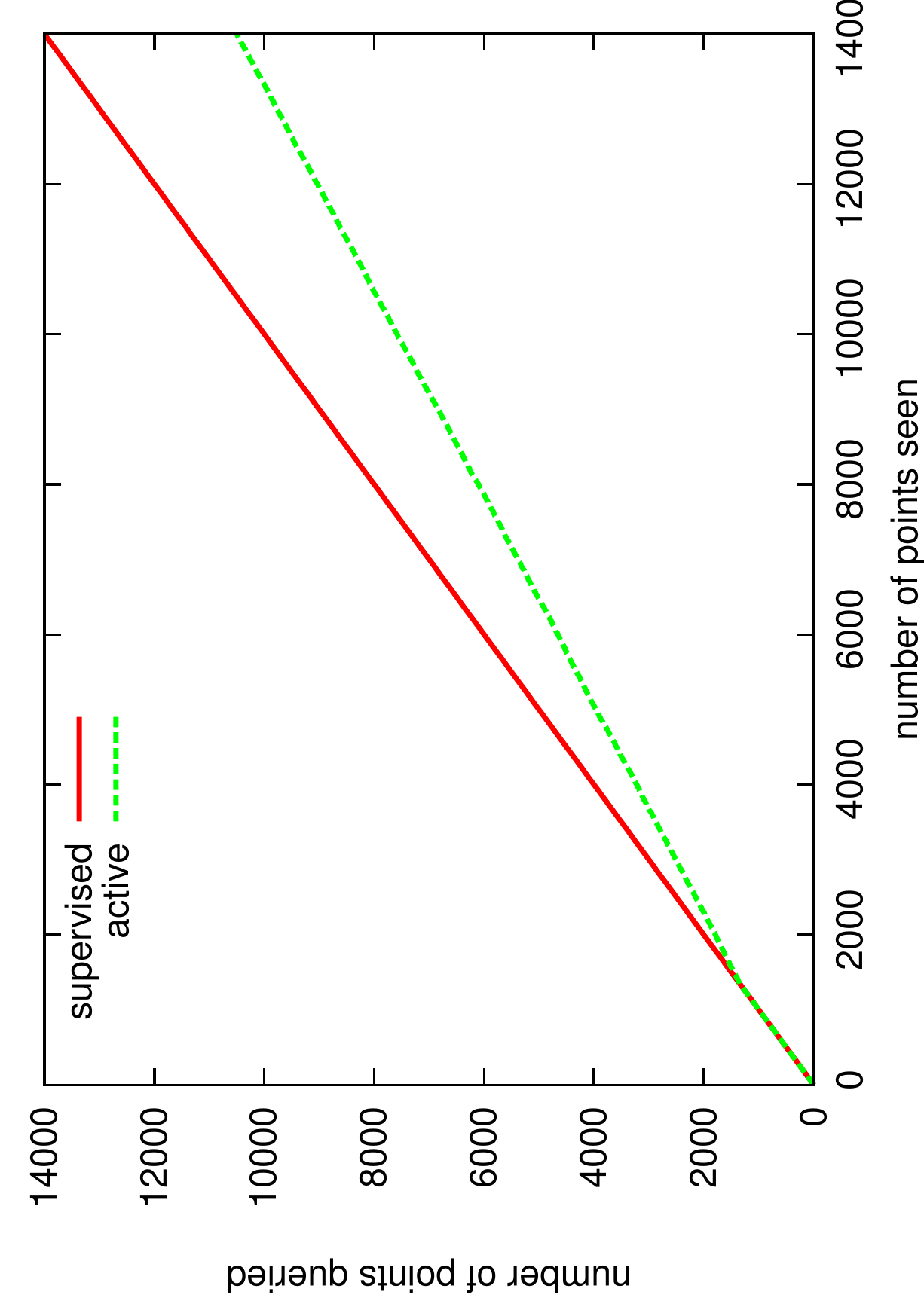}}
Figure 7: {{\bf letter} dataset (25 classes): 14000 training and
6000 test examples, bootstrapped on the initial 10\%. Queried 75.0\%.}
\end{figure}
}

\bibliography{iwal}

\end{document}